\pgfplotsset{compat=newest}
\tikzstyle{normal_node}=[fill=white, draw=black, shape=circle]
\tikzstyle{normal_edge}=[->, draw={rgb,255: red,64; green,64; blue,64}]
\newtheorem{assumption}{Assumption}
\newtheorem{task}{Task}
\newcommand{\set}[1]{\{ #1 \}}
\newcommand{\reals}{\mathbb{R}}
\newcommand{\nat}[0]{\mathbb{N}\xspace}
\newcommand{\init}{I}
\newcommand{\term}{T}
\newcommand{\jirp}{JIRP\xspace}
\newcommand{\sjirp}{SRMI\xspace}
\newcommand{\setOfQFunctions}{Q}
\newcommand{\qFunction}{q}
\newcommand{\rmLabels}{P}
\newcommand{\rmLabelingFunction}{L}
\newcommand{\rmInputAlphabet}{2^\rmLabels}
\newcommand{\machine}{\fsm{A}}
\newcommand{\machineB}{\fsm{B}}
\newcommand{\mealyStates}{V}
\newcommand{\mealyCommonState}{\fsm{v}}
\newcommand{\mealyCommonStateP}{\fsm{p}}
\newcommand{\mealyCommonStateQ}{\fsm{q}}
\newcommand{\mealyCommonStatePrime}{\mealyCommonState '}
\newcommand{\mealyInputAlphabet}{\ensuremath{{2^\rmLabels}}}
\newcommand{\mealyInit}{{\mealyCommonState_\init}}
\newcommand{\mealyTerm}{{\mealyCommonState_\term}}
\newcommand{\mealyOutputAlphabet}{\ensuremath{O}}
\newcommand{\mealyOutput}{\sigma}
\newcommand{\mealyTransition}{\delta}
\newcommand{\inputTrace}{\lambda}
\newcommand{\outputTrace}{\rho}
\newcommand{\setOfSeenRewards}{R}
\newcommand{\setOfSeenTraces}{A}
\newcommand{\setOfConsistentTraces}{C}
\newcommand{\hypothesisRM}{\mathcal H}
\newcommand{\hypothesisFinal}{\hypothesisRM^f}
\newcommand{\hypothesisFinalSeq}{\hypothesisFinal_1 \hypothesisFinal_2 \cdots}
\newcommand{\hypothesisSeq}{\hypothesisRM_1 \hypothesisRM_2 \cdots}
\newcommand{\hypothesisRMG}{\mathcal G}
\newcommand{\hypothesisRMZ}{\mathcal Z}
\newcommand{\environmentRM}{\mathcal T}
\newcommand{\counterexamples}{\ensuremath{X}}
\newcommand{\mdp}{\mathcal{M}}
\newcommand{\mdpStates}{S}
\newcommand{\mdpCommonState}{s}
\newcommand{\mdpCommonStatePrime}{\mdpCommonState '}
\newcommand{\mdpInit}{\mdpCommonState_\init}
\newcommand{\mdpActions}{A}
\newcommand{\mdpCommonAction}{a}
\newcommand{\mdpCommonActionPrime}{\mdpCommonAction '}
\newcommand{\mdpProb}{p}
\newcommand{\mdpRewardFunction}{R}
\newcommand{\mdpDiscount}{\gamma}
\newcommand{\mdpTrajectory}{\zeta}
\newcommand{\mdpLabel}{\ell}
\newcommand{\mdpLabelPrime}{\mdpLabel '}
\newcommand{\mdpRewards}{\mathcal{R}}
\newcommand{\mdpCommonReward}{r}
\newcommand{\mdpCommonRewardPrime}{\mdpCommonReward '}
\newcommand{\mdpCommonRewardHat}{\hat{\mdpCommonReward}}
\newcommand{\trajectory}[1]{\ensuremath{\mdpCommonState_0 \mdpCommonAction_1 \mdpCommonState_1\ldots \mdpCommonAction_#1 \mdpCommonState_{#1}}}
\newcommand{\labelSequence}[1]{\ensuremath{\mdpLabel_1 \mdpLabel_2\ldots \mdpLabel_{#1}}}
\newcommand{\labelSequencePrime}[1]{\ensuremath{\mdpLabel_1 ' \mdpLabel_2 ' \ldots \mdpLabel_{#1} ' }}
\newcommand{\rewardSequence}[1]{\ensuremath{\mdpCommonReward_1 \mdpCommonReward_2\ldots \mdpCommonReward_{#1}}}
\newcommand{\rewardSequencePrime}[1]{\ensuremath{\mdpCommonReward_1 ' \mdpCommonReward_2 ' \ldots \mdpCommonReward_{#1} ' }}
\newcommand{\outputDistribution}{d}
\newcommand{\distributionSequence}[1]{\ensuremath{\outputDistribution_1 \outputDistribution_2 \ldots \outputDistribution_{#1}}}
\newcommand{\mdpTransitionTriplet}{\ensuremath{(\mdpCommonState, \mdpCommonAction, \mdpCommonState ')}}
\newcommand{\policy}{\ensuremath{\pi}}
\newcommand{\commonReward}{r}
\newcommand{\maxLengthEpisode}{m}
\newcommand{\fsm}[1]{\mathsf{#1}}
\newcommand{\Pref}{\mathit{Pref}}
\newcommand{\distributionClosed}[2]{\ensuremath{D([#1, #2])}}
\newcommand{\distributionClosedN}[3]{\ensuremath{D_{#1}([#2, #3])}}
\newcommand{\uniformClosed}[2]{\ensuremath{U([#1, #2])}}
\newcommand{\dispersionParameter}{\epsilon_c}
\newcommand{\epsconsistent}{\ensuremath{\dispersionParameter}-consistent}
\newcommand{\mean}{\mu}
\newcommand{\estimatesSet}{r}
\newcommand{\meanEstimate}{\mean '}
\DeclareMathOperator*{\E}{\mathbb{E}}
\newcommand{\equivExpect}{\sim_{\E}}
\newcommand{\equivGraph}{\sim}
\newcommand{\isomorphism}{\ensuremath{I}}
\newcommand{\isomorphismFix}{\ensuremath{f_I}}
\newcommand{\trace}{\ensuremath{(\inputTrace, \outputTrace)}}
\newcommand{\easyCx}{\emph{1}\xspace}
\newcommand{\hardCx}{\emph{2}\xspace}
\newcommand{\pictureScale}{0.6}
\newcommand{\nodeScale}{0.8}
\newcommand{\formula}{\Phi}
\newcommand{\rmSize}{n}
\newcommand{\smtFormula}{\formula_{\rmSize}^{\counterexamples, \dispersionParameter}}
\begin{document}
\title{Reinforcement Learning with Stochastic Reward Machines\thanks{A shorter version of this paper appeared in the Proceedings of the Thirty-Sixth AAAI Conference on Artificial Intelligence (AAAI-22)~\cite{srmi}. Source code available at \url{https://github.com/corazza/srm}.}}
%
%

\author {
    Jan Corazza\textsuperscript{\rm 1,2}\hspace*{0.15cm}
    Ivan Gavran\textsuperscript{\rm 2}\hspace*{0.15cm}
    Daniel Neider\textsuperscript{\rm 2}
}
\authorrunning{J. Corazza et al.}
\institute {
    corazzajan@gmail.com\hspace*{0.15cm}
    gavran@mpi-sws.org\hspace*{0.15cm}
    neider@mpi-sws.org\\ \vspace*{0.15cm}
    \textsuperscript{\rm 1} University of Zagreb\\
    \textsuperscript{\rm 2} Max Planck Institute for Software Systems
}

\maketitle              
\begin{abstract}

Reward machines are an established tool for dealing with reinforcement learning problems in which rewards are sparse and depend on complex sequences of actions.
However, existing algorithms for learning reward machines assume an overly idealized setting where rewards have to be free of noise.
To overcome this practical limitation, we introduce a novel type of reward machines, called stochastic reward machines, and an algorithm for learning them.
Our algorithm, based on constraint solving, learns minimal stochastic reward machines from the explorations of a reinforcement learning agent.
This algorithm can easily be paired with existing reinforcement learning algorithms for reward machines and guarantees to converge to an optimal policy in the limit.
We demonstrate the effectiveness of our algorithm in two case studies and show that it outperforms both existing methods and a naive approach for handling noisy reward functions.


    \keywords{Reinforcement Learning  \and Reward Machines \and Non-Markovian Rewards \and SMT Solving \and SAT Solving \and Stochastic Reward Machines.}
\end{abstract}
%
%
%


\section{Introduction}
\label{sec:intro}

The key assumption of a reinforcement learning (RL) model is that the reward function is \emph{Markovian}: 
the received reward depends only on the agent's immediate state and action.
For many practical RL tasks, however, 
the most natural conceptualization of the state-space is the one in which the reward function depends on the history of actions that the agent has performed.
(Those are typically the tasks in which the agent is rewarded for complex behaviors over a longer period.)

An emerging tool used for reinforcement learning in environments with such non-Markovian rewards are \emph{reward machines}.
A reward machine is an automaton-like structure which augments the state space of the environment, capturing the temporal component of rewards.
It has been demonstrated that Q-learning~\cite{sutton2018reinforcement}, a standard RL algorithm, can be adapted to use and benefit from reward machines~\cite{icarte_rms}.

Reward machines are either given by the user, or inferred by the agent on the fly~\cite{DBLP:conf/aaai/GaonB20,DBLP:conf/aaai/Furelos-BlancoL20,jirp}.
The used learning methods ensure that the inferred machine is minimal, enabling quick optimal convergence.
Besides a faster convergence, learning minimal reward machines contributes to the interpretability of problems with an unclear reward structure.

Reward machines only model deterministic rewards.
When the machine is not known upfront, existing learning methods prove counterproductive in the presence of noisy rewards,
as there is either no reward machine consistent with the agent's experience, or the learned reward machine explodes in size, overfitting the noise.

In this paper, we introduce the notion of a \emph{stochastic reward machine},
which can capture noisy, non-Markovian reward functions, together with a novel algorithm for learning them.
The algorithm is an extension of the constraint-based formulation of Xu et al.~\cite{jirp}.
The extension relies on the parameters of the reward's distribution, making sure that experiential rewards respect the distribution. 
In every iteration, if the agent establishes that its current hypothesis about the machine is wrong, 
it updates the hypothesis (either by fixing the machine's parameters or by solving the constraint problem and learning a new machine).

While one could use the proposed algorithm to learn a suitable (non-stochas-tic) reward machine and use that machine to guide the reinforcement learning process, we recognize the value of modeling stochasticity explicitly.
First, it reveals information about the distribution of rewards, improving interpretability of the problem at hand.
Second, a direct correspondence between the stochastic reward function and the stochastic reward machine that models it makes the exposition clearer.


In our experimental evaluation,
we demonstrate the successful working of our algorithm on two noisy, non-Markovian case studies.
We compare our algorithm with existing methods (which do not deal explicitly with noise) on the same case studies:
as expected, disregarding the noise by using existing inference algorithms for classical RMs performs substantially worse than our new approach.
Finally, we compare our algorithm to a baseline method that 
tries to ``average out'' the noise.

To summarize, in this paper we
1) introduce stochastic reward machines,
2) present a novel algorithm for learning stochastic reward machines by a RL agent, 
and
3) experimentally demonstrate the efficacy of our algorithm.

\subsection{Related Work}

Using finite state machines to capture non-Markovian reward functions has been proposed already in the early work on the topic~\cite{DBLP:conf/aaai/BacchusBG96}.
Toro Icarte et al.~\cite{icarte_rms,DBLP:journals/corr/abs-2010-03950} introduced reward machines (known as \emph{Mealy machines} in other contexts) as a suitable formalism
and an algorithm that takes advantage of the reward machine structure.
Similar formalisms, including temporal logics, have been proposed by others, too~\cite{DBLP:conf/nips/JothimuruganAB19,DBLP:conf/aaai/BrafmanGP18,DBLP:conf/ijcai/CamachoIKVM19}.
This line of work assumes the reward machine to be given.

The assumption of the user-provided reward machine has been lifted in the follow-up works~\cite{DBLP:conf/aaai/GaonB20,jirp,DBLP:conf/aaai/Furelos-BlancoL20}.
Learning temporal representations of the reward has been explored in different contexts:
for multi-agent settings~\cite{neary2020reward},
for reward shaping~\cite{DBLP:conf/aaai/VelasquezBBBAMA21},
or with user-provided advice~\cite{DBLP:conf/aaai/NeiderGGT0021}.
All these approaches are fragile in presence of noisy rewards.
Other approaches focus on learning attainable sequences of labels~\cite{DBLP:conf/aaai/HasanbeigJAMK21,IcarteNIPS2019}, disregarding reward values.
If reward noise ranges over a finite set of values, Velasquez et al.~\cite{velasquez2021learning} propose active learning of reward machine-like automata with a probabilistic transition function.

Outside of the non-Markovian context, many works studied noise in rewards.
Everitt et al.~\cite{DBLP:conf/ijcai/EverittKOL17} give an overview of potential sources of noise/corruption and provide the impossibility result for learning under arbitrarily corrupted rewards.
Romoff et al.~\cite{DBLP:conf/corl/Romoff0PFP18} propose learning a reward estimator function alongside the value function.
Wang, Liu, and Li \cite{DBLP:conf/aaai/WangLL20} consider a problem of rewards being flipped according to a certain distribution.
While all these works consider much richer noise models, they are not readily applicable to the non-Markovian rewards setting.

%
%


\section{Preliminaries}

\label{sec:background}

This section introduces the necessary background on reinforcement learning and the formalism of reward machines for capturing non-Markovian rewards. We illustrate all notions on a running example called \emph{Mining}.
Mining, inspired by variations of Minecraft~\cite{andreas2017modular}, models the problem of finding and exploiting ore in an unknown environment.
We use this example throughout the paper.

Fig.~\ref{figure:mining-map} shows the Mining world.
An agent moves in a bounded grid-world intending to find valuable ore, gold (\texttt{G}) and platinum (\texttt{P}), and bring it to the marketplace (\texttt{M}).
Furthermore, the agent's success depends on the purity of the ore and the market prices, which is stochastic and cannot be influenced by the agent (though the spread of the market prices can naturally be bounded).
In order to do so successfully, the agent has to fulfill specific additional requirements, such as finding equipment (\texttt{E}) beforehand and not stepping into traps (\texttt{T}).

\begin{figure}[th]
    \centering
    \begin{tikzpicture}

\tikzset{mymarks/.style={-,opacity=0.5,
decoration={markings, mark=between positions 0 and 0.99 step #1 with {\draw[-latex](0,0)--(0.1,0);}},
postaction={decorate}
}
}

\draw[mymarks=0.5cm,blue] (-0.75,0.75) -- (0.25,0.75);
\draw[mymarks=0.5cm,blue] (0.25,0.75) -- (0.25,-0.25);
\draw[mymarks=0.5cm,blue] (0.25,-0.25) -- (-0.75,-0.25);

\draw[step=0.5cm,color=gray] (-0.5*4,-0.5*3) grid (0.5*4,0.5*3);
\matrix[matrix of nodes,nodes={inner sep=0pt,text width=0.5cm,align=center,minimum height=0.5cm}]{
            &            & \texttt{T} &            &            &            & \texttt{T} &            \\
 \texttt{E} &            & $\color{green}\bigcirc$           & \texttt{E} &            & \texttt{G} &            &            \\
 \texttt{T} &            &            & \texttt{T}           & \texttt{P}             &            &            &            \\
            &            & \texttt{M}            &            &  &            & \texttt{T} \\
            &            & \texttt{P} &            &            &            &            &            \\
            & \texttt{T} &            &            &            &            &            & \texttt{P} \\};

\end{tikzpicture}
    \caption{A simplified example of the Mining environment grid and a trajectory. The agent's initial state is shown as a circle. Cells display their state labels, or are blank if the label is $\emptyset$. The trajectory indicated by arrows shows the agent collecting the equipment (\texttt{E}), finding platinum (\texttt{P})  and bringing it to the marketplace (\texttt{M}).}
    \label{figure:mining-map}
\end{figure}

In reinforcement learning, an agent learns to solve such tasks through repeated, often episodic interactions with an environment.
After each step, the agent receives feedback (a reward $\mdpCommonReward \in \mdpRewards \subset \reals$) based on its performance and acts to maximize a (discounted) sum of received rewards.
This interaction forms a stochastic process: while the environment is in some state $\mdpCommonState \in \mdpStates$, the agent chooses an action $\mdpCommonAction \in \mdpActions$ according to a policy $\policy(\mdpCommonState, \mdpCommonAction)$ (a function mapping states to probability distributions over the action space), causing the environment to transition into the next state $\mdpCommonState ' \in \mdpStates$ and giving the agent a reward $\mdpCommonReward$ (where $\mdpStates$ is the state space and $\mdpActions$ is the action space of the environment).
A realization of this process is a \emph{trajectory} $\trajectory{k}$ (optionally, rewards may be included in this sequence).
The agent continually updates its policy (i.e., learns) based on received rewards.

A reinforcement learning environment is formalized as a \emph{Markov decision process} (MDP).
As is common in the context of reward machines, we equip our MDPs with a labeling function that maps transitions to labels.
These labels correspond to high-level events that are relevant for solving the given task (e.g., finding gold, indicated by \texttt{G}).



\begin{definition}
    \label{def:mdp}
    A (labeled) \textbf{Markov decision process} is a tuple $\mdp = (\mdpStates, \mdpInit, \mdpActions,\allowbreak \mdpProb, \rmLabels, \rmLabelingFunction)$
    consisting of a finite state space $\mdpStates$, 
    an agent's initial state $\mdpInit \in \mdpStates$, 
    a finite set $\mdpActions$ of actions, and a probabilistic transition function $\mdpProb \colon \mdpStates \times \mdpActions \times \mdpStates \to [0, 1]$. 
    Additionally, a finite set $\rmLabels$ of propositional variables, and a labeling function $\rmLabelingFunction \colon \mdpStates \times \mdpActions \times \mdpStates \to \rmInputAlphabet$ determine the set of relevant high-level events that the agent senses in the environment.
    We define the size of an MDP $\mdp$, denoted as $|\mdp|$, to be the cardinality of the set $\mdpStates$  (i.e., $|\mdp| = |\mdpStates|$).
\end{definition}

Let us briefly illustrate this definition.
The agent always starts in state $\mdpInit$. 
It  then interacts with the environment by taking an action at each step.
If the agent at state $\mdpCommonState \in \mdpStates$ takes the action $a \in \mdpActions$,
its next state will be $\mdpCommonState' \in \mdpStates$ with probability $\mdpProb(\mdpCommonState, a, \mdpCommonState')$.
For this transition $(\mdpCommonState, a, \mdpCommonState')$, a labeling function $\rmLabelingFunction$ emits a set of high-level propositional variables (i.e., a label).
One can think of these labels as knowledge provided by the user. 
If the user can not provide any labeling function, $\rmLabelingFunction$ can simply return the current transition.

The Mining example can be modeled by the MDP in which states are fields of the grid world, and the agent's actions are moving in the four cardinal directions.
The transition function $\mdpProb$ models the possibility of the agent slipping when making a move.
The propositional variables used for labeling are $\texttt{E}$ (equipment found), $\texttt{P}$ (platinum found), $\texttt{G}$ (gold found), $\texttt{M}$ (marketplace reached), $\texttt{T}$ (fell into a trap); $\emptyset$ signifies that no relevant events have occurred.

To capture an RL problem fully, we need to equip an MDP with a reward function. 
Formally, a reward function $\mdpRewardFunction$ maps trajectories from $(\mdpStates \times \mdpActions)^+ \times \mdpStates$ to a cumulative distribution function over the set of possible rewards $\mdpRewards$.
For labeled MDPs, the reward function typically depends on finite sequences of observed labels rather than on the more low-level sequences of state-action pairs.

Let us emphasize the significance of defining the reward function $\mdpRewardFunction$ over finite sequences of states and actions.
Using the entire history as the argument enables us to reward behaviors that respect temporal relations and persistence naturally.
For instance, in the Mining example, the goal is to accomplish the following steps: (1) find equipment, (2) exploit a mine, (3) deliver the ore to the specified location, all while avoiding traps.
Note that the order in which the agent performs these steps is crucial:
finding ore and going to the marketplace without first picking up equipment is not rewarded; stepping into traps ends the episode with reward zero.
Reward functions that make use of the agent's entire exploration (as opposed to only the current state and action) have first been studied by Bacchus, Boutilier, and
Grove \cite{DBLP:conf/aaai/BacchusBG96} and are termed \emph{non-Markovian} reward functions.

Toro Icarte et al.~\cite{icarte_rms} have shown that \emph{reward machines (RMs)} are a powerful formalism for representing non-Markovian reward functions.
Intuitively, one can view the role of reward machines as maintaining the sufficient amount of memory to turn the non-Markovian reward function back into a ordinary, Markovian one.
This results in an important feature of RMs: they enable using standard RL algorithms (which would otherwise not be usable with non-Markovian reward functions).
Furthermore, taking advantage of the structure present in RMs, the algorithms can be made more efficient.

On a technical level, RMs are finite-state automata that transduce a sequence $\labelSequence{k}$ of labels into a sequence $\rewardSequence{k}$ of rewards.
For the sake of brevity, we here omit a formal definition and introduce the concept of RMs using an example.
To this end, let us consider the RM $\machine$ in Fig.~\ref{subfig:classical-RM}, which \emph{attempts} to capture the reward function of the Mining example and operates as follows.

Starting from the initial state $\mealyInit$, the machine transitions to an intermediate state $\mealyCommonState_1$ upon finding equipment (indicated by formula\footnote{We use propositional formulas to succinctly describe sets of labels. For instance, the formula $p \lor q$ over $\mathcal P = \{ p, q \}$ corresponds to the set $\set{\set{p}, \set{q}, \set{p, q}}$.} \texttt{E}).
From there, $\machine$ either moves to state $\mealyCommonState_2$ (upon finding platinum) or to state $\mealyCommonState_3$ (upon finding gold).
The reward, however, is delayed until the agent reaches the marketplace (indicated by the label \texttt{M}) and $\machine$ transitions to the terminal state $\mealyTerm$.
Once this happens, the machine outputs a reward of 1 (if the agent has previously collected gold) or a reward of 1.1 (if the agent has collected platinum).
By contrast, violating the prescribed order, failing to reach the marketplace, or stepping onto a trap results in no reward for the agent.
For the label sequence $(\emptyset, \set{\texttt{E}}, \emptyset, \set{\texttt{P}}, \emptyset, \emptyset, \set{\texttt{M}})$ (from Fig.~\ref{figure:mining-map}), the machine $\machine$ will produce the reward sequence $(0,0,0,0,0,0,1)$.

Note, however, that the RM in Fig.~\ref{subfig:classical-RM} fails to capture the stochastic nature of rewards in the Mining example, which stems from varying purity of the ore and market fluctuations.
This problem arises from an intrinsic property of reward machines: they only allow outputs to be real numbers and, hence, can only encode \emph{deterministic} reward functions.
This observation shows that reward machines, as currently defined and used in the literature, cannot capture the common phenomenon of noisy rewards!
In the next section, we show how to generalize the model of reward machines in order to overcome this limitation.


%

\begin{figure}[t]
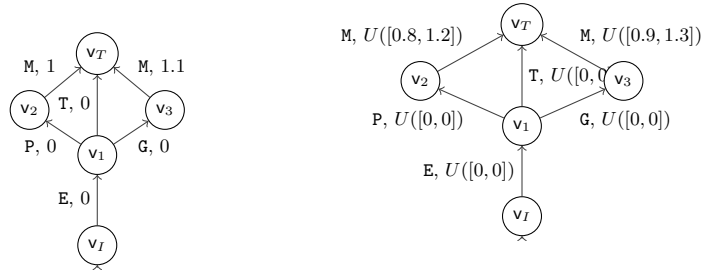

    \centering
    \begin{subfigure}{0.45\textwidth}
        \centering
        \ctikzfig{mine}
        \caption{RM: rewards are real numbers.}
        \label{subfig:classical-RM}
    \end{subfigure}
    \begin{subfigure}{0.45\textwidth}
        \centering
        \ctikzfig{mineStochastic}
        \caption{SRM: rewards are probability distributions.}
        \label{subfig:stochastic-RM}
    \end{subfigure}
    \caption{
        Classical reward machine for the mining example (left) and stochastic reward machine (right).
        Transitions are labeled with input-output pairs, where labels are given by propositional formulas encoding subsets of $2^P$.
        All states have a transition to $\mealyTerm$ with output $0$ on reading \texttt{T} (trap) (only depicted for $\mealyCommonState_1$).
        All remaining, missing transitions are self-loops with output $0$.
    }
    \label{figure:mining-machine}
\end{figure}

\section{Stochastic reward machines}
\label{sec:noisemodel}

To capture stochastic, non-Markovian reward functions, we introduce the novel concept of stochastic reward machines.


\begin{definition}
    \label{def:srm}
    A \textbf{stochastic reward machine} (SRM) $\machine = (\mealyStates, \mealyInit, \mealyInputAlphabet, \mealyOutputAlphabet, \mealyTransition, \mealyOutput)$
    is defined by a finite, nonempty set $\mealyStates$ of states, 
    an initial state $\mealyInit \in \mealyStates$, 
    an input alphabet $\mealyInputAlphabet$, 
    a (deterministic) transition function $\mealyTransition : \mealyStates \times \mealyInputAlphabet \to \mealyStates$,
    an output alphabet $\mealyOutputAlphabet$ which is a finite set of cumulative distribution functions (CDFs),
    and output function $\mealyOutput : \mealyStates \times \mealyInputAlphabet \to \mealyOutputAlphabet$. 
    We define the size of $\machine$, denoted as $|\machine|$, to be $|\mealyStates|$ (i.e., the cardinality of the set $\mealyStates$).
\end{definition}

To define the semantics of an SRM, let $\trajectory{k}$ be a trajectory of an RL episode and $\mdpLabel_1 \ldots \mdpLabel_k$ the corresponding label sequence with $\mdpLabel_{i+1} = \rmLabelingFunction(s_{i}, a_{i+1},\allowbreak s_{i+1})$ for each $i \in \{ 0, \ldots, k-1 \}$.
The run of an SRM $\machine$ on the label sequence $\mdpLabel_1 \ldots \mdpLabel_n$ is then a sequence $\mealyCommonState_0 \mdpLabel_1 F_1 \mealyCommonState_1 \ldots \mealyCommonState_{n-1} \mdpLabel_n F_n \mealyCommonState_n$ where $\mealyCommonState_{i+1} = \mealyTransition(\mealyCommonState_i, \mdpLabel_{i+1})$ and $F_i = \mealyOutput(\mealyCommonState_i, \mdpLabel_{i+1})$ for all $i \in \{ 0, \ldots, k-1 \}$.
The sequence $F_1 \ldots F_k$ of CDFs is now used to determine the rewards that the agent receives: for each $i \in \{ 0, \ldots, k-1 \}$, the CDF $F_i$ is sampled and the resulting reward $r_i \in \mathcal R$ is returned.
This process results in a pair $(\lambda, \rho)$ with $\lambda = \mdpLabel_1 \ldots \mdpLabel_k$ and $\rho = r_1 \ldots r_k$, which we call a \emph{trace}.

We will often refer to SRM's output by the corresponding probability distribution.
We focus on continuous but bounded probability distributions, $\set{D_{1}([a_1,\allowbreak b_1]), \ldots, \distributionClosedN{n}{a_n}{b_n} }$ where $n \in \nat$ and $\distributionClosedN{i}{a}{b}$ is a distribution over the interval $[a, b]$.
It is not hard to see that the classical reward machines are a special case of Definition~\ref{def:srm}: one just has to set $a_i = b_i$ for each $i \in \{ 1, \ldots, n \}$, which will result in probability distributions that assign probability $1$ to a single real number.

Fig.~\ref{subfig:stochastic-RM} shows an SRM $\machineB$ for the mining example.
Note the difference to the classical RM of Fig.~\ref{subfig:classical-RM}: the transitions now output probability distributions instead of real values
(non-trivial uniform distributions are on transitions from $\mealyCommonState_2$ to $\mealyCommonState_T$ and from $\mealyCommonState_3$ to $\mealyCommonState_T$).
This difference allows us to capture the noise in the rewards of the example.
For example, the label sequence of our running example $(\emptyset, \set{\texttt{E}}, \emptyset, \set{\texttt{P}}, \emptyset, \emptyset, \set{\texttt{M}})$
will be transduced to a sequence of distributions. 
Sampling these distributions produces different reward sequences (e.g., $(0,0,0,0,0,0,0.95)$ and $(0,0,0,0,0,0,1)$).


Toro Icarte et al.~\cite{icarte_rms} introduced a version of Q-learning for reward machines called QRM, which assumes that the reward machine outputs deterministic rewards.
In the following, we examine if the guarantees of QRM can be retained when working with stochastic reward machines.

\subsection{QRM with stochastic reward machines}

In addition to specifying a learning task, reward machines help with the learning process itself. 
QRM assumes knowledge of a reward machine representation of environment reward and splits the Q-function update over all RM states by using transition outputs in lieu of empirical rewards.

For an MDP transition $\mdpTransitionTriplet$ with label $\mdpLabel$, QRM executes Q-function updates $\setOfQFunctions^{\mealyCommonState}(\mdpCommonState, \mdpCommonAction) \overset{\alpha}{\gets} r + \mdpDiscount \max_{\mdpCommonActionPrime} \setOfQFunctions^{\mealyCommonStatePrime}(\mdpCommonStatePrime, \mdpCommonActionPrime)$ for each reward machine state $\mealyCommonState$, with $\mealyCommonStatePrime$ being the succeeding state, $\alpha \in \reals$ the learning rate, and $r = \mealyOutput(\mealyCommonState, \mdpLabel)$ the reward due to reading label $\mdpLabel$ in state $\mealyCommonState$. 
These updates are equivalent to regular Q-learning updates in the cross-product of the original MDP and the reward machine.
The reward function is Markovian with respect to the resulting cross-product decision process.
As Q-learning also converges correctly for a \emph{stochastic} Markovian reward,
it is easy to see that QRM can find the optimal policy induced by an SRM by using samples $\mdpCommonRewardHat \sim \mealyOutput(\mealyCommonState, \mdpLabel)$ in the update rule.

We also remark that SRMs allow for a relaxed notion of equivalence.
As we will show in the following lemma, it is not necessary for two SRMs to be exactly equal in order to induce the same optimal policy.
We generalize the notion of exact functional equivalence (that is necessary for RMs) into the equivalence in expectation.

\begin{definition}
    SRMs $\machine$ and $\machineB$ are \textbf{equivalent in expectation} ($\machine \equivExpect \machineB$) if for every label sequence $\inputTrace = \labelSequence{k}$ we have $\E[\machine(\inputTrace)_i] = \E[\machineB(\inputTrace)_i]$ for every $1 \leq i \leq k$, that is if they output sequences of CDFs with equal expected values (where $\machine(\inputTrace)_i$ refers to the $i$-th CDF in the output sequence $\machine(\inputTrace)$).
\end{definition}

Lemma~\ref{lemma:same-policies} can simplify representation and inference of SRMs, allowing the algorithm to rely only on the expected values of the transitions in the inferred SRM.

\begin{lemma}
    \label{lemma:same-policies}
    If $\machine = (\mealyStates, \mealyInit, \mealyInputAlphabet, \mealyOutputAlphabet, \mealyTransition, \mealyOutput)$ and $\machineB = (\mealyStates ', \mealyInit ', \mealyInputAlphabet, \mealyOutputAlphabet ', \mealyTransition ', \mealyOutput ')$ are equivalent in expectation then they induce the same optimal policy over the same environment.
\end{lemma}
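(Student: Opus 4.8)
The plan is to work at the level of the objective that singles out an optimal policy---the expected discounted return---and to show that this objective is literally the \emph{same} function of the policy whether rewards are drawn from $\machine$ or from $\machineB$. Since two reward sources that assign every policy the same value must have the same set of maximizers, identical optimal policies follow immediately. So the whole argument reduces to proving that equivalence in expectation implies equality of the return functional.

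First I would fix an arbitrary (history-dependent) policy $\policy$ over the underlying MDP and write its value as $J^{\policy} = \E\bigl[\sum_{t \geq 1} \mdpDiscount^{t} r_{t}\bigr]$, where $r_t$ is the reward emitted at step $t$. Because the output distributions are supported on bounded intervals $[a_i,b_i]$ and $\mdpDiscount < 1$, the series converges absolutely and dominated convergence lets me interchange the sum with the expectation, yielding $J^{\policy} = \sum_{t \geq 1} \mdpDiscount^{t}\,\E[r_{t}]$. The point of this rewriting is that only the \emph{means} $\E[r_t]$ enter; by linearity of expectation this holds regardless of whether the successive reward samples are independent, so I never need to reason about the joint law of the $r_t$.

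Next I would identify $\E[r_t]$ explicitly. Conditioned on a trajectory prefix $\trajectory{t}$---equivalently, on the label sequence $\inputTrace = \labelSequence{t}$ it induces through $\rmLabelingFunction$---the reward $r_t$ is the sample drawn from the $t$-th output CDF $\machine(\inputTrace)_t$ of the run, so its conditional mean is exactly $\E[\machine(\inputTrace)_t]$. Averaging over prefixes gives $\E[r_t] = \sum_{\inputTrace} \Pr_{\policy,\mdpProb}[\inputTrace]\,\E[\machine(\inputTrace)_t]$, where the law $\Pr_{\policy,\mdpProb}$ over label sequences is generated purely by the shared MDP dynamics $\mdpProb$, the shared labeling function, and $\policy$, and therefore does not depend on which SRM is in use. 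Invoking $\machine \equivExpect \machineB$, each factor $\E[\machine(\inputTrace)_t]$ equals $\E[\machineB(\inputTrace)_t]$, so $\E[r_t]$---and hence $J^{\policy}$---coincides for the two machines, for every $\policy$. Maximizing over $\policy$ then yields the same optimizers.

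I expect the main subtlety to be that $\machine$ and $\machineB$ may have different state sets, so the two cross-product decision processes used by QRM live on different augmented state spaces and their policies are not literally the same objects. The cleanest way around this is precisely the history-level viewpoint above: an optimal memory-augmented policy is just a finite-state realization of an optimal map from histories (label-sequence prefixes) to action distributions, and the computation shows this underlying map is optimized by the same policies under both SRMs. Equivalently, one could reduce each SRM to the deterministic mean-reward machine outputting $\E[\mealyOutput(\mealyCommonState,\mdpLabel)]$; the QRM discussion already shows each SRM and its mean-reward machine induce the same optimal policy, while $\machine \equivExpect \machineB$ makes the two mean-reward machines agree on every label sequence and hence optimal-policy equivalent. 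I would present the direct return-equality argument as the main line and mention this reduction only as a sanity check.
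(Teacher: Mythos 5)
Your proof is correct, but it takes a genuinely different route from the paper's. You argue directly at the level of the expected discounted return: for any fixed history-dependent policy, boundedness of the output distributions plus dominated convergence lets you pull the expectation inside the sum, so the value functional depends on the SRM only through the means $\E[\machine(\inputTrace)_t]$, which agree by hypothesis; identical value functionals then force identical sets of maximizers. The paper instead proceeds through intermediate machines: it first treats the graph-isomorphic case by constructing an SRM $\machine'$ that copies $\machine$'s structure but carries $\machineB$'s output distributions, arguing that the Q-functions $\qFunction_{\policy}^{\machine}$ and $\qFunction_{\policy}^{\machine'}$ on the cross-product MDP coincide because only expected rewards enter them; the general (non-isomorphic) case is then handled by dropping both SRMs to classical deterministic reward machines $\hat{\machine}$, $\hat{\machineB}$ that output the expectations, for which equivalence in expectation is literal equivalence. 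Your ``sanity check'' reduction is essentially this second step of the paper's proof. What your approach buys: it is more self-contained, avoids the case split on graph-isomorphism and the bookkeeping with intermediate machines, and sidesteps the awkwardness that the two cross-product state spaces differ (which the paper glosses over) by working with history-level policies from the start. What the paper's approach buys: it stays inside the QRM/cross-product formalism used throughout the rest of the paper, so the lemma plugs directly into the algorithmic discussion. One small point worth making explicit in your write-up: your claim that the law over label sequences is independent of which SRM is in use tacitly assumes policies do not condition on realized rewards; this is consistent with the paper's definition of a policy, and harmless here because the SRM state is a deterministic function of the label history, so reward realizations carry no information useful for control---but it deserves a sentence.
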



Now that it has been established that learning the optimal policy using stochastic reward machines is viable, the remaining question is whether one can drop the assumption that knowledge of the environment reward is accessible, and learn an SRM representation of it in conjunction with the policy (instead of assuming it to be given).


\section{Inferring SRMs}
\label{sec:methods}

In this section, we show how to infer SRMs from data obtained through the agent's exploration.
In Section~\ref{sec:baseline}, we present a seemingly appealing baseline algorithm, and we explain its weaknesses.
We follow it by proposing \emph{Stochastic Reward Machine Inference} (SRMI)
as a better approach in Section~\ref{sec:sjirp}.

Both algorithms intertwine RL and learning of SRMs by starting with an initial hypothesis SRM and
\begin{enumerate}[label={(\arabic*)}]
    \item running QRM which generates a sequence of traces, and
    \item if there are traces contradicting the current hypothesis, inferring a new one.
\end{enumerate}

The steps repeat with the goal of recovering an SRM that captures the environment reward and using it to learn the optimal policy. QRM is performed in conjunction with the latest hypothesis. Traces which contradict the hypothesis are called \emph{counterexamples}.

Due to Lemma~\ref{lemma:same-policies}, the task is simplified to finding an SRM that is merely equivalent in expectation with environment rewards (instead of agreeing on exact distributions). We assume that a bound on noise dispersion $\dispersionParameter > 0$ is known (e.g., sensors come with pre-specified measurement error tolerance). 
Definition~\ref{def:epsilon-consistency} uses the $\dispersionParameter$ parameter to formalize the notion of consistency with a trace.

\begin{definition}
    \label{def:epsilon-consistency}
    A trace $(\inputTrace, \outputTrace) = (\labelSequence{k}, \rewardSequence{k})$ is \textbf{$\dispersionParameter$-consistent} with an SRM $\hypothesisRM$, which outputs a sequence of distributions $\hypothesisRM(\inputTrace) = \distributionSequence{k}$ if for all $1 \leq i \leq k$ we have $|\mdpCommonReward_i - \E[\outputDistribution_i]| \leq \epsilon_c$, i.e. if all of the observed rewards $\mdpCommonReward_i$ are plausible samples from $\hypothesisRM$.
\end{definition}

\sjirp can only recover an SRM representation of a noisy environment reward that meets an additional requirement, which we formalize in Assumption~\ref{assumption:non-containment}.
Informally, the assumption requires the noise from one reward distribution not to fully conceal the signal of a different one (unless they share means).
We are convinced this requirement is met in a large class of practical, real-world scenarios.

\begin{assumption}
    \label{assumption:non-containment}
    Let $\mealyOutputAlphabet = \set{\distributionClosedN{1}{a_1}{b_1}, \ldots, \distributionClosedN{n}{a_n}{b_n}}$ be the output alphabet of the environment SRM. Let $\dispersionParameter = \max_i \{b_i - a_i\} / 2$ be the noise dispersion bound known to the agent. We then assume that any two output distributions that can be covered with an $\dispersionParameter$-interval must have equal expectations: for all $1 \leq i, j \leq n$ and $\mean \in \reals$ we have $[a_i, b_i] \cup [a_j, b_j] \subseteq [\mean - \dispersionParameter, \mean + \dispersionParameter] \implies \E[\distributionClosed{a_i}{b_i}] = \E[\distributionClosed{a_j}{b_j}]$.
\end{assumption}

This assumption is satisfied in the Mining example.
The set $\set{\uniformClosed{0.1}{0.2},\allowbreak \uniformClosed{0}{1}}$ breaks Assumption~\ref{assumption:non-containment}: $\dispersionParameter$-intervals cannot distinguish these distributions, and they differ in expected values, so they must be distinguished.
The set $\set{\uniformClosed{0.1}{0.9},\allowbreak \uniformClosed{0}{1}}$ respects it, and distinguishing these distributions is unnecessary as they have equal expectations.

\subsection{Baseline algorithm}
\label{sec:baseline}

One may be tempted to repurpose existing techniques for inferring reward machines from a collection of traces. There are two important obstacles:

\begin{enumerate}
    \item Traces may be prefix-inconsistent: during exploration, the agent may encounter traces $(\labelSequence{m}, \rewardSequence{m})$ and $(\labelSequencePrime{n}, \rewardSequencePrime{n})$ s.t. for some $1 \leq i \leq \min\set{m, n}$ we have $\mdpLabel_1 \ldots \mdpLabel_i = \mdpLabelPrime_1 \ldots \mdpLabelPrime_i$ but $\mdpCommonReward_1 \ldots \mdpCommonReward_i \neq \mdpCommonRewardPrime_1 \ldots \mdpCommonRewardPrime_i$.
          The consequence is that no reward machine can capture both traces.
    \item Even if a collection of traces where noise is present is prefix-consistent, the (inferred) reward machine will tend to be impractically large because it will overfit noisy data.
\end{enumerate}

The baseline algorithm solves these problems by obtaining multiple samples for each trace in the counterexample set, and producing estimates for transition means \emph{before} inferring the structure of the reward machine. Starting with an initial hypothesis the following steps are repeated:

\begin{enumerate}[label={(\arabic*)}]
    \item run QRM which generates a sequence of traces
    \item when a counterexample is encountered, pause QRM and replay its trajectory until enough samples are collected
    \item preprocess the counterexample set so that multiple samples collected in (2) are collapsed into estimates for environment reward means
    \item use the deterministic RM inference method by Xu et al.~\cite{jirp} to infer the new minimal consistent hypothesis
\end{enumerate}

As knowledge of environment SRM structure is not assumed,
it is necessary to sample traces (instead of individual transitions).
The number of samples required in (2) is determined from $\dispersionParameter$ and an additional parameter for the minimal distance between two different transition means in the environment SRM. The preprocessing in (3) ensures (up to a confidence level) that different estimates for the same means are aggregated into one, and the result respects $\dispersionParameter$-consistency with the original sample set.

This approach seems to eliminate the two issues presented by stochastic rewards: since every prefix is sampled many times and then averaged and aggregated, there can be no prefix inconsistencies.
Furthermore, aggregation leaves little room for overfitting noise. 
However, the agent must be able to sample traces multiple times on demand.
As we show in section~\ref{sec:results}, this is costly, and sometimes even impossible. 

\subsection{Stochastic Reward Machine Inference}
\label{sec:sjirp}

In contrast to the baseline algorithm, 
\sjirp uses counterexamples to improve hypotheses \emph{immediately} by relying on a richer constraint solving method that is able to encode $\dispersionParameter$-consistency with the counterexample set directly. This removes the need for replaying trajectories.

As before, the task for the algorithm is to recover a minimal SRM that is equivalent in expectation to the true environment one, and use it to learn the optimal policy. 
Starting with an initial hypothesis SRM, the following steps are repeated:

\begin{enumerate}[label={(\arabic*)}]
    \item Run QRM and record all traces in a set $\setOfSeenTraces$ (Lines~\ref{line:qrm} to \ref{line:add-trace} in Algorithm~\ref{algorithm:sjirp}).
    \item When a counterexample is encountered, add it to the set $\counterexamples$ and attempt to make the current hypothesis $\dispersionParameter$-consistent with $\counterexamples$ by shifting its outputs (Lines~\ref{line:counterexample} to \ref{line:new-isomorphic}).
    \item If Step~(2) failed, solve a constraint problem to infer the new hypothesis (Line~\ref{line:new-inferred}).
    \item Compute the final mean estimates to correct outputs of the inferred hypothesis based on empirical rewards in $\setOfSeenTraces$ (Line~\ref{line:estimates}).
\end{enumerate}

The algorithm generates a sequence of hypothesis SRMs $\hypothesisRM_1 \hypothesisRM_2 \cdots$ and a sequence of counterexample sets $\counterexamples_1 \counterexamples_2 \cdots$ (with $\counterexamples_i \subset \counterexamples_j$ for all $i < j$) where $\hypothesisRM_j$ is consistent with $\counterexamples_{j-1}$ (and thus every $\counterexamples_i$ for $1 \leq i \leq j$). 
For simplicity, we assume noise distributions are symmetric, but \sjirp can easily be extended to cover asymmetric ones (see Appendix~\ref{sec:asymmetric-noise}). Hypothesis outputs are in the form $\distributionClosed{\mean - \dispersionParameter}{\mean + \dispersionParameter}$, where $\distributionClosed{a}{b}$ is a symmetric distribution over an interval, and $\mean \in \reals$ is the estimated mean of a particular transition.
Two SRMs are structurally isomorphic (Line~\ref{line:is-isomorphic}) if their underlying automata without the reward output are isomorphic in a graph-theoretic sense.


\begin{algorithm}
    \SetAlgoLined
    Initialize SRM $\hypothesisRM$ with a set of states $\mealyStates$\;
    Initialize a set of q-functions $\setOfQFunctions = \set{\qFunction^{\mealyCommonState} | \mealyCommonState \in \mealyStates}$\;
    Initialize $\counterexamples = \emptyset$ and $\setOfSeenTraces = \emptyset$\;
    \For{episode $n = 1, 2, \ldots$}{
        $(\inputTrace, \outputTrace, \setOfQFunctions) \gets$ QRM\_episode$(\hypothesisRM, \setOfQFunctions)$\; \nllabel{line:qrm}
        add $(\inputTrace, \outputTrace)$ to $\setOfSeenTraces$\; \nllabel{line:add-trace}
        
        \If{$\hypothesisRM$ not $\dispersionParameter$-consistent with $(\inputTrace, \outputTrace)$}{  \nllabel{line:counterexample}
            add $(\inputTrace, \outputTrace)$ to $\counterexamples$\; \nllabel{line:add-counterexample}
            \eIf{found SRM $\hypothesisRMZ$ isomorphic with $\hypothesisRM$ and $\dispersionParameter$-consistent with $\counterexamples$} { \nllabel{line:is-isomorphic}
                $\hypothesisRM ' \gets \hypothesisRMZ$\; \nllabel{line:new-isomorphic}
            }{
                infer $\hypothesisRM '$ from $\counterexamples$\;  \nllabel{line:new-inferred}
            }
            $\hypothesisRM \gets$ \texttt{Estimates} $(\hypothesisRM ', \setOfSeenTraces)$\; \nllabel{line:estimates}
            reinitialize $\setOfQFunctions$\;
        }
    }
    \caption{\sjirp}
    \label{algorithm:sjirp}
\end{algorithm}

There are many $\dispersionParameter$-consistent SRMs that can be returned by the constraint solving method in Line~\ref{line:new-inferred}, not necessarily having the best estimates for transition means. 
To correct for this, the function \emph{Estimates} assigns sets of observed rewards to each hypothesis transition by simulating its runs on traces in $\setOfSeenTraces$ and uses them to compute the final estimates.

Our algorithm categorized every counterexample as either Type~\easyCx or Type~\hardCx. A counterexample $(\inputTrace, \outputTrace)$ is of Type \easyCx with respect to $\hypothesisRM_i$ if there exists a graph-isomorphic SRM $\hypothesisRMZ$ that is consistent with $(\inputTrace, \outputTrace)$ and $\counterexamples_{i-1}$ (otherwise it is Type \hardCx). Then $\hypothesisRM_{i+1} = \hypothesisRMZ$ and $\counterexamples_i = \counterexamples_{i-1} \cup \{(\inputTrace, \outputTrace)\}$. Intuitively the current hypothesis can be "fixed" to become consistent with Type \easyCx counterexamples by shifting outputs without changing the structure of the SRM.
We now discuss how our algorithm handles counterexamples of Type~\hardCx.

\subsubsection{Inferring the structure and output range of a new hypothesis}

When a new Type \hardCx counterexample is encountered (i.e., outputs in the current hypothesis cannot be shifted to make it consistent), \sjirp infers a new hypothesis from the counterexamples, effectively solving the following task.

\begin{task}
    Given a set of traces $\counterexamples$ and dispersion $\dispersionParameter$, produce a minimal SRM that is $\dispersionParameter$-consistent with all traces in $\counterexamples$.
    \label{task:one}
\end{task}

We accomplish Task~\ref{task:one} by encoding it as a constraint problem in real arithmetic. 
Our encoding is an extension of the encoding used in the JIRP algorithm~\cite{jirp}.
Minimality is ensured by starting from machines of size $\rmSize = 1$, increasing the size by $1$ each time the constraint problem proves unsatisfiable, and returning the first successful result. 
For a given size, we use a collection of propositional and real variables from which one can extract an SRM, and constrain them so they (1) encode a valid SRM and (2) ensure that the SRM is consistent with the set $\counterexamples$.
More precisely, for size $n \in \mathbb N \setminus \{ 0 \}$, parameter $\dispersionParameter$, and counterexample set $\counterexamples$, we construct a formula $\smtFormula$ with the following two properties:
\begin{enumerate}[label=(\alph*)]
    \item $\smtFormula$ is satisfiable iff there exists an SRM $\hypothesisRM$ of size $\rmSize$ that is $\dispersionParameter$-consistent with every trace in $\counterexamples$. \label{item:property-one}
    \item Every satisfying assignment for variables in $\smtFormula$ contains sufficient information to construct a consistent SRM of size $n$. \label{item:property-two}
\end{enumerate}

The formula $\smtFormula$ is built using the following variables:
\begin{itemize}
    \item $d_{\mealyCommonStateP, \mdpLabel, \mealyCommonStateQ}$ are propositional variables, true iff $\mealyTransition(\mealyCommonStateP, \mdpLabel) = \mealyCommonStateQ$
    \item $o_{\mealyCommonState, \mdpLabel}$ are real variables that match the value of $\E[\mealyOutput(\mealyCommonState, \mdpLabel)]$
    \item $x_{\inputTrace, \mealyCommonState}$ are propositional variables encoding machine runs, true if the SRM arrives in state $\mealyCommonState$ upon reading label sequence $\inputTrace$
\end{itemize}

We use these variables to define constraints (\ref{formula:initState}) - (\ref{formula:epsilonConsistency}).
Formula~\ref{formula:initState} requires that at the beginning (after an empty sequence), the SRM is in the initial state. 
Formula~\ref{formula:transition} requires that the SRM transitions to exactly one state upon seeing a label. 

The remaining two formulas connect the SRM to the set $\counterexamples$
(we use symbol $\Pref(\counterexamples)$ for the set of prefixes of traces in $\counterexamples$).
Formula~\ref{formula:prefixRuns} connects seen prefixes to the transition function captured by variables $d_{\mealyCommonStateP, \mdpLabel, \mealyCommonStateQ}$.
Finally, Formula~\ref{formula:epsilonConsistency} ensures $\dispersionParameter$-consistency.

\begin{align}
    x_{\epsilon, \mealyInit} \wedge \bigwedge_{\mealyCommonState \in \mealyStates \setminus \set{\mealyInit}} \neg x_{\epsilon, \mealyCommonState} \label{formula:initState}                                                                                                                                                                                   \\
    \bigwedge_{\mealyCommonStateP \in \mealyStates} \bigwedge_{\mdpLabel \in \mealyInputAlphabet} \biggl[ \biggl[ \bigvee_{\mealyCommonStateQ \in \mealyStates} d_{\mealyCommonStateP, \mdpLabel, \mealyCommonStateQ} \biggr] \wedge \biggl[ \bigwedge_{\substack{\mealyCommonStateQ, \mealyCommonStateQ ' \in \mealyStates                                    \\ \mealyCommonStateQ \neq \mealyCommonStateQ '}} \neg d_{\mealyCommonStateP, \mdpLabel, \mealyCommonStateQ} \vee \neg d_{\mealyCommonStateP, \mdpLabel, \mealyCommonStateQ '} \biggr] \biggr] \label{formula:transition} \\
    \bigwedge_{(\inputTrace \mdpLabel, \outputTrace \mdpCommonReward) \in \Pref(\counterexamples)} \bigwedge_{\mealyCommonStateP, \mealyCommonStateQ \in \mealyStates} (x_{\inputTrace, \mealyCommonStateP} \wedge d_{\mealyCommonStateP, \mdpLabel, \mealyCommonStateQ}) \rightarrow x_{\inputTrace \mdpLabel, \mealyCommonStateQ} \label{formula:prefixRuns} \\
    \bigwedge_{(\inputTrace \mdpLabel, \outputTrace \mdpCommonReward) \in \Pref(\counterexamples)} \bigwedge_{\mealyCommonState \in \mealyStates} x_{\inputTrace, \mealyCommonState} \rightarrow |o_{\mealyCommonState, \mdpLabel} - \mdpCommonReward| \leq \dispersionParameter \label{formula:epsilonConsistency}
\end{align}

The formula $\smtFormula$ is defined as a conjunction of formulas (\ref{formula:initState}) - (\ref{formula:epsilonConsistency}).
One can easily see that properties \ref{item:property-one}, \ref{item:property-two} hold for $\smtFormula$ as there is a bijection between assignments for which $\smtFormula$ is true and consistent SRMs (modulo distributions).

Let $\hypothesisRM '$ be the SRM constructed from a model that satisfies the above constraints, with $\mealyOutput_{\hypothesisRM '}(\mealyCommonState, \mdpLabel) = \distributionClosed{o_{\mealyCommonState, \mdpLabel} - \dispersionParameter}{o_{\mealyCommonState, \mdpLabel} + \dispersionParameter}$. For every $(\inputTrace, \outputTrace) \in X$ of length $k$ and $1 \leq i \leq k$, due to (\ref{formula:epsilonConsistency}), we have $|\E[\hypothesisRM '(\inputTrace)_i] - \outputTrace_i| < \dispersionParameter$ and so $\hypothesisRM '$ is the new $\dispersionParameter$-consistent hypothesis. When $\hypothesisRM '$ is constructed by correcting for a type \easyCx counterexample, it is consistent by definition.

\subsubsection{Correcting output distributions in inferred SRMs}


As there can be many SRMs \epsconsistent{} with \counterexamples{} that are not equivalent in expectation, Task~\ref{task:one} need not have a unique solution.
To illustrate how nonuniqueness can prohibit correct convergence of hypothesis SRMs,
let \counterexamples{} contain samples from a single-state SRM $\environmentRM$ with two possible outputs,
$\distributionClosed{0}{1}$ on $\mdpLabel_1$ and $\distributionClosed{10}{100}$ on $\mdpLabel_2$ ($\dispersionParameter = (100-10)/2 = 45$).
For $\alpha \in \reals$ let $\hypothesisRM_{\alpha}$ be a single-state SRM
with two possible outputs, $\distributionClosed{\alpha - 45}{\alpha + 45}$ on $\mdpLabel_1$ and $\distributionClosed{10}{100}$ on $\mdpLabel_2$.
Then for all $-44 \leq \alpha, \beta \leq 45$ ($\alpha \neq \beta$) we have $\hypothesisRM_{\alpha} \not\equivExpect \hypothesisRM_{\beta}$, yet both are \epsconsistent{} with \counterexamples{}.
Thus, \sjirp must choose solutions to Task~\ref{task:one} so that any generated hypothesis sequence will converge to the same limit $\environmentRM$.

This is done in the \emph{Estimates} step of Algorithm~\ref{algorithm:sjirp},
which simulates runs of $\hypothesisRM '$ (the inferred solution to Task~\ref{task:one}) on \emph{consistent} traces in $\setOfSeenTraces$, yielding final estimates of means based on empirical rewards (shown in Algorithm~\ref{algorithm:estimates}).
The need to filter for consistency is due to the fact that the set of SRMs consistent with $\setOfSeenTraces$ is often strictly smaller than the set of those consistent with $\counterexamples$.
As $\counterexamples$ grows to capture more information about environment reward, the need for filtering recedes.

\begin{algorithm}
    \SetAlgoLined
    Initialize sets $\estimatesSet(\mealyCommonState, \mdpLabel)$ for states $\mealyCommonState \in \mealyStates$ and $\mdpLabel \in \mealyInputAlphabet$\;
    Initialize empty output function $\mealyOutput '$\;
    \For{$(\inputTrace, \outputTrace) \in \setOfSeenTraces$}{
        Skip $(\inputTrace, \outputTrace)$ if not $\dispersionParameter$-consistent with $\hypothesisRM$\;
        Simulate a run of $\hypothesisRM$ on $\inputTrace$ disregarding its outputs and record rewards from $\outputTrace$ in corresponding $\estimatesSet(\mealyCommonState, \mdpLabel)$ sets\;
        \For{$\mealyCommonState \in V$, $\mdpLabel \in \mealyInputAlphabet$}{
            $\mean ' \gets (\max \estimatesSet(\mealyCommonState, \mdpLabel) + \min \estimatesSet(\mealyCommonState, \mdpLabel))/2$\;
            Set $\mealyOutput '(\mealyCommonState, \mdpLabel) = U[\mean ' - \dispersionParameter, \mean ' + \dispersionParameter]$\;
        }
    }
    Return $\hypothesisRM$ with $\mealyOutput '$ as the output function\;
    \caption{Estimates}
    \label{algorithm:estimates}
\end{algorithm}

Using the midrange estimator for outputs in $\hypothesisRM$ ensures it remains consistent with the counterexamples. As new Type \easyCx counterexamples will always be found eventually, \emph{Estimates} runs infinitely often which guarantees convergence to correct means.

\subsubsection{Convergence to an optimal policy}

\begin{theorem}
    \label{theorem:convergence}
    Given Assumption~\ref{assumption:non-containment} on the output alphabet of the environment SRM and an $\epsilon$-greedy exploration strategy, \sjirp converges in the limit to an SRM that is equivalent in expectation to the true environment one.
\end{theorem}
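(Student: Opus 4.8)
The plan is to decompose the argument into two phases that mirror the two mechanisms of Algorithm~\ref{algorithm:sjirp}: first show that the \emph{structure} of the hypotheses stabilizes after finitely many iterations, and then show that, once the structure is fixed, the \emph{Estimates} step drives the transition means to the correct values. Combining structural stabilization with mean convergence then yields an SRM equivalent in expectation to $\environmentRM$, and the optimal-policy claim follows as a corollary of Lemma~\ref{lemma:same-policies}.

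For the first phase I would begin by noting that the true environment SRM $\environmentRM$ is itself $\dispersionParameter$-consistent with every trace it can generate: each reward $\mdpCommonReward_i$ is sampled from a symmetric distribution $\distributionClosed{a}{b}$ with $b - a \leq 2\dispersionParameter$, so $|\mdpCommonReward_i - \E[\distributionClosed{a}{b}]| \leq (b-a)/2 \leq \dispersionParameter$, which is exactly Definition~\ref{def:epsilon-consistency}. Hence for every counterexample set $\counterexamples$ the formula $\smtFormula$ is satisfiable at $\rmSize = |\environmentRM|$ by property~\ref{item:property-one}, so the minimality search feeding Line~\ref{line:new-inferred} always returns a hypothesis of size at most $|\environmentRM|$. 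Since a failed (UNSAT) size stays UNSAT as $\counterexamples$ grows monotonically, the minimal size is non-decreasing and bounded, hence eventually constant; and since each Type~\hardCx counterexample makes the current structure permanently inconsistent with $\counterexamples$ (so the solver can never select it again), while there are only finitely many transition structures of bounded size over the finite alphabet $\mealyInputAlphabet$, only finitely many Type~\hardCx counterexamples can occur. After the last one the structure $S^*$ is fixed, and every further counterexample is Type~\easyCx, handled by the output-shift at Line~\ref{line:new-isomorphic}.

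For the second phase I would argue that once the structure equals $S^*$, the midrange estimator in Algorithm~\ref{algorithm:estimates} converges to the correct means. The $\epsilon$-greedy strategy ensures every reachable transition $(\mealyCommonState, \mdpLabel)$ of $S^*$ is exercised infinitely often, so the sample set $\estimatesSet(\mealyCommonState, \mdpLabel)$ almost surely fills the union of the supports of all environment distributions routed through it. Here Assumption~\ref{assumption:non-containment} does the essential work: because the output of $(\mealyCommonState, \mdpLabel)$ is $\dispersionParameter$-consistent with every reward it receives, this union lies inside a single $\dispersionParameter$-interval, so the assumption forces all those distributions to share one expectation $\mean$; symmetry of the noise then makes the union symmetric about $\mean$, so the midrange $(\max \estimatesSet(\mealyCommonState, \mdpLabel) + \min \estimatesSet(\mealyCommonState, \mdpLabel))/2$ converges to $\mean = \E[\environmentRM(\inputTrace)_i]$ for every label sequence routed through this transition. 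This is precisely what delivers $\hypothesisRM \equivExpect \environmentRM$ in the limit, since equality of these expectations on all label sequences is the definition of equivalence in expectation.

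The hard part will be the second phase, and specifically making the interplay between Assumption~\ref{assumption:non-containment}, the symmetry hypothesis, and the midrange estimator rigorous when several environment transitions are merged into one transition of $S^*$: one must show simultaneously that such merging is harmless (the merged distributions share an expectation) and that the estimator targets exactly that common expectation. A second delicate point is proving that \emph{Estimates} is genuinely invoked infinitely often rather than freezing at an estimate that is consistent but biased; for a transition whose interval has width strictly below $2\dispersionParameter$ the consistency band leaves slack, so I expect to argue that the maximal-width distribution (which, by $\dispersionParameter = \max_i\{b_i-a_i\}/2$, has zero slack and appears on some reachable transition) keeps producing counterexamples until its estimate is exact, thereby triggering recomputation of \emph{all} means over ever-larger consistent samples. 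The filtering for $\dispersionParameter$-consistency in Algorithm~\ref{algorithm:estimates} must also be handled, showing that the consistent subset of $\setOfSeenTraces$ still routes infinitely many samples to each reachable transition so that the convergence argument is not starved.
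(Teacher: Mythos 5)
Your two-phase decomposition (stabilize the structure, then let \emph{Estimates} drive the means) is the same skeleton as the paper's proof, and your Phase~1 is sound: monotone growth of $\counterexamples$ plus the fact that a Type~\hardCx counterexample permanently excludes a transition structure is exactly the content of the paper's Lemma~\ref{lemma:revisiting-classes} and Corollary~\ref{corollary:final-hypothesis}, and your version is if anything more direct; you also correctly flag delicate points the paper treats lightly (that \emph{Estimates} must fire infinitely often, and that consistency filtering must not starve the estimator). The first genuine problem is in Phase~2: your justification of the key step is circular. You claim the union of supports of the distributions routed through a transition ``lies inside a single $\dispersionParameter$-interval'' \emph{because} the output is $\dispersionParameter$-consistent with every reward it receives --- but it is not; rewards falling outside the current band are precisely what counterexamples are, and the hypothesis mean keeps moving. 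The correct argument (the paper's Lemma~\ref{lemma:equal-expectations}) runs through structural finality: if two routed distributions had different expectations, Assumption~\ref{assumption:non-containment} forces the diameter of the union of their supports to exceed $2\dispersionParameter$; one must then show that two rewards more than $2\dispersionParameter$ apart actually get \emph{recorded} in $\counterexamples$ (a sample only enters $\counterexamples$ when it is inconsistent with the hypothesis current at that moment, so this needs a band-positioning argument), after which no graph-isomorphic copy of the structure can be consistent --- a Type~\hardCx counterexample, contradicting finality.

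The second, more fundamental gap is that you never address episode length, and without it the theorem's conclusion does not follow. Equivalence in expectation quantifies over \emph{all} label sequences, while exploration can only ever certify agreement on sequences that fit inside an episode. The paper needs Lemma~\ref{lemma:m-attainable} and Corollary~\ref{corollary:infinitely-often} ($\epsilon$-greedy exploration visits every $\maxLengthEpisode$-attainable trajectory infinitely often) together with Lemma~\ref{lemma:hard-counterexamples}: if $\maxLengthEpisode \geq 2^{|\mdp|+1}(|\environmentRM|+1)-1$, then any structure that is wrong on \emph{some} attainable sequence is already wrong on one of length at most $\maxLengthEpisode$, by an automata-theoretic distinguishing-word bound imported from Xu et al. This is what guarantees both that a plausible Type~\hardCx counterexample for an inadequate structure is almost surely realized (so the final structure is adequate), and that agreement on bounded-length explored sequences extends to all sequences. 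In your proposal the structure stabilizes, but nothing rules out that it stabilized on a structure which disagrees with $\environmentRM$ only on sequences longer than an episode; the limit SRM would then fail to be equivalent in expectation to $\environmentRM$, so this ingredient cannot be omitted.
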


The proof of Theorem~\ref{theorem:convergence} follows similar reasoning to the convergence proof for \jirp. We first establish that \sjirp does not revisit structurally isomorphic SRMs. As there are only finitely many such structures for a fixed maximal size, \sjirp "settles" in a final structure. Then Assumption~\ref{assumption:non-containment} guarantees that \emph{Estimates} will converge to the correct expectations.

\begin{corollary}
    \label{corollary:optimal-policy}
    \sjirp converges to an optimal policy in the limit.
\end{corollary}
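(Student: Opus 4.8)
The plan is to obtain the corollary by chaining three facts already available in the excerpt: Theorem~\ref{theorem:convergence}, which gives convergence of the hypothesis sequence to an SRM that is equivalent in expectation to the environment; Lemma~\ref{lemma:same-policies}, which upgrades equivalence in expectation to identity of induced optimal policies; and the observation made when discussing QRM with stochastic reward machines, namely that QRM driven by samples $\mdpCommonRewardHat \sim \mealyOutput(\mealyCommonState, \mdpLabel)$ is ordinary Q-learning on the cross-product of the MDP and the SRM, with a stochastic but Markovian reward, and hence converges to the optimal policy induced by whatever SRM it is run against, provided every state-action pair of the cross-product is visited infinitely often.

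First I would invoke Theorem~\ref{theorem:convergence}: under Assumption~\ref{assumption:non-containment} and an $\epsilon$-greedy strategy, the hypothesis sequence $\hypothesisRM_1 \hypothesisRM_2 \cdots$ converges to a limit SRM $\hypothesisRM^\star$ with $\hypothesisRM^\star \equivExpect \environmentRM$. Here I would lean on the structure of that theorem's proof: because there are only finitely many non-isomorphic SRM structures up to the settled size, after finitely many episodes the algorithm commits to a single structure, and thereafter the only changes are to the transition means produced by the \emph{Estimates} step, which converge to the true expectations. By Lemma~\ref{lemma:same-policies}, $\hypothesisRM^\star$ and $\environmentRM$ induce the same optimal policy $\policy^\star$ over the environment MDP.

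Next I would combine this with the QRM convergence fact. Since $\epsilon$-greedy exploration ensures that every state-action pair of the cross-product MDP is taken infinitely often and the usual step-size conditions for Q-learning are met, the QRM updates converge to the optimal Q-values of the SRM against which they run. As in the limit this SRM is equivalent in expectation to $\environmentRM$ and therefore (again by the lemma) induces $\policy^\star$, the greedy policy extracted from the Q-functions converges to an optimal policy of the environment, which is exactly the claim.

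The main obstacle I anticipate is reconciling the QRM guarantee, which is stated for a \emph{fixed} reward structure, with the fact that \sjirp reinitializes the Q-functions whenever a counterexample triggers a hypothesis update, and such updates (mean refinements via the midrange estimator) may recur infinitely often even after the structure has settled, since the estimated means only reach the true means in the limit. I would address this by arguing two things: (i) the optimal policy is locally stable in the transition means, so once the estimated means are close enough to the true ones the hypothesis induces exactly $\policy^\star$ and the Q-learning target stops moving; and (ii) the magnitude of successive mean corrections tends to zero, so the reinitializations become asymptotically negligible and the interleaved QRM phases still drive the Q-functions to the optimal values. Making the interaction of (i) and (ii) precise---in particular ruling out that infinitely many reinitializations stall convergence, and handling ties where the optimal policy is not locally unique---is the delicate part, and it is precisely where the ``in the limit'' qualifier of the statement does its work.
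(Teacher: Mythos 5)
Your proposal follows essentially the same route as the paper's own proof, which consists precisely of chaining Theorem~\ref{theorem:convergence} (convergence to an SRM equivalent in expectation to the environment one), Lemma~\ref{lemma:same-policies} (equivalence in expectation implies the same induced optimal policy), and the observation that QRM with stochastic reward machines converges to the optimal policy. The subtlety you flag about Q-function reinitializations recurring infinitely often (since \emph{Estimates} keeps refining means after the structure settles) is genuine but is not treated in the paper at all---its proof of the corollary is exactly the one-sentence chain above---so your sketch is, if anything, more careful than the original.
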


Corollary~\ref{corollary:optimal-policy} follows from Theorem~\ref{theorem:convergence} due to Lemma~\ref{lemma:same-policies}, which guarantees that two SRMs that are equivalent in expectation induce the same optimal policy, and finally due to the fact that QRM with stochastic reward machines converges to an optimal policy.

\begin{figure}[!t]
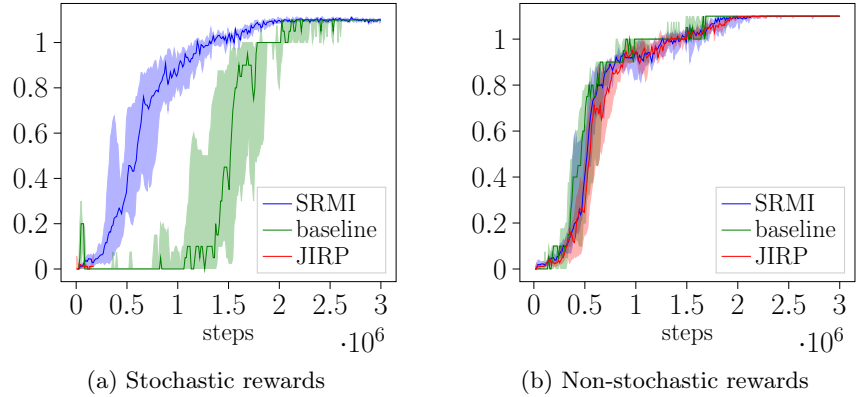

    \centering
    \begin{subfigure}{0.5\textwidth}
        \centering
        \input{images/results/mining.tex}
        \caption{Stochastic rewards}
        \label{figure:mining-results}
    \end{subfigure}%
    \begin{subfigure}{0.5\textwidth}
        \centering
        \input{images/results/nonstochastic_mining.tex}
        \caption{Non-stochastic rewards}
        \label{figure:mining-main-jirp}
    \end{subfigure}
    \caption{Results on the mining environment}
    \label{figure:results}
\end{figure}

\begin{figure}[b]
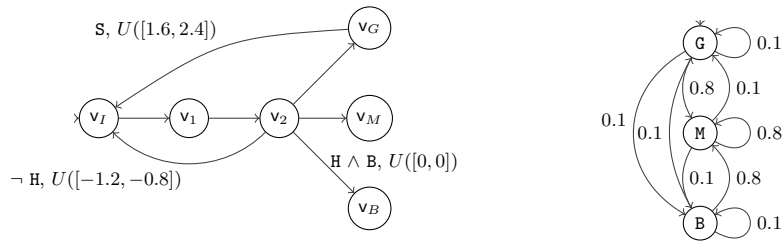

    \centering
    \begin{subfigure}{0.5\textwidth}
        \centering
        \ctikzfig{harvest}
        \caption{SRM: all missing transitions are self-loops with reward 0}
        \label{figure:harvest-srm}
    \end{subfigure}%
    \begin{subfigure}{0.5\textwidth}
        \centering
        \ctikzfig{harvest_mdp}
        \caption{MDP}
        \label{figure:harvest-mdp}
    \end{subfigure}
    \caption{Harvest environment}
    \label{figure:harvest-sketch-mdp}
\end{figure}


\section{Results}
\label{sec:results}

To assess the performance of \sjirp, we have implemented a Python\,3 prototype based on code by Toro Icarte et al.~\cite{icarte_rms}, which we have made publicly available\footnote{Source code available at \url{https://github.com/corazza/srm}.}.
To assess its performance, we compare \sjirp to the baseline algorithm and the \jirp algorithm for classical reward machines on two case studies: the mining example from Section~\ref{sec:background} and an example inspired by harvesting (which we describe shortly).


Our primary metric is the cumulative reward averaged over the last $100$ episodes. 
We conducted $10$ independent runs for each algorithm, using Z3~\cite{DBLP:conf/tacas/MouraB08} as the constraint solver.
All experiments were conducted on a  3\,GHz machine with 1.5\,TB RAM.

\subsubsection*{Mining}
Fig.~\ref{figure:mining-results} shows the comparison on the Mining environment.
The interquartile ranges for the reward are drawn as shaded areas, while the medians are drawn as solid lines.
For this case study, we have set the baseline algorithm to replay $20$ traces per counterexample.
As can be seen from the figure, \sjirp converges faster to an optimal policy (reward 1) than both the baseline algorithm and \jirp.
The latter times out because it is unable to deal with the noise properly and tries to infer larger and larger RMs.

Fig.~\ref{figure:mining-main-jirp} compares \sjirp, baseline, and \jirp on a non-stochastic version of the Mining environment using the reward machine of Fig.~\ref{subfig:classical-RM} to define the rewards.
All algorithms perform equally well. Thus, \sjirp does not incur a runtime penalty, even when used in non-stochastic settings.



\subsubsection*{Harvest}
The \emph{Harvest} environment represents a crop-farming cycle. The agent is rewarded for performing a sequence of actions, \texttt{P}, \texttt{W}, \texttt{H}, \texttt{S} (plant, water, harvest, sell), and penalized for breaking it. MDP states \texttt{G}, \texttt{M}, \texttt{B} (good, medium, bad) transition as given by the dynamics in Fig.~\ref{figure:harvest-mdp}. The labeling function $\rmLabelingFunction(\mdpCommonState, \mdpCommonAction, \mdpCommonState')$ returns the transition, effectively making trajectories $\trajectory{k}$ their own label sequences. The reward mean depends on the MDP state during the harvest action as shown in Fig.~\ref{figure:harvest-srm}.

\begin{figure}[t]
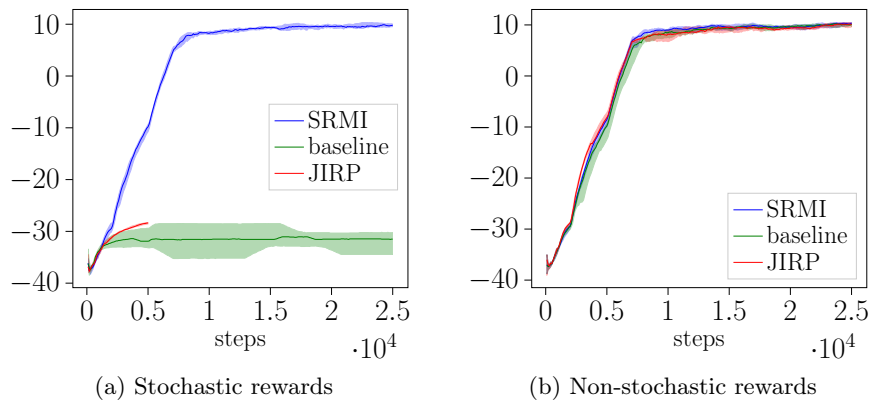

  \centering
  \begin{subfigure}{0.5\textwidth}
    \centering
    \input{images/results/harvest.tex}
    \caption{Stochastic rewards}
    \label{figure:harvest-stochastic}
  \end{subfigure}%
  \begin{subfigure}{0.5\textwidth}
    \centering
    \input{images/results/nonstochastic_harvest.tex}
    \caption{Non-stochastic rewards}
    \label{figure:harvest-nonstochastic}
  \end{subfigure}
  \caption{Results on the harvest environment}
  \label{figure:harvest-results}
\end{figure}





The Harvest example is well suited for showing the benefits of \sjirp over the baseline, because the probability that the agent will repeatedly see a given trajectory is very low.

Fig.~\ref{figure:harvest-stochastic} shows the comparison on the Harvest environment. \sjirp was successful in learning the optimal policy, while the baseline algorithm got stuck in collecting the required number of samples ($5$), and \jirp again timed out. In a modified Harvest environment, without noise, the algorithms do equally well (Fig.~\ref{figure:harvest-nonstochastic}).



\section{Conclusion}
\label{sec:conclusion}

In this work we introduced Stochastic reward machines as a general way of representing non-Markovian stochastic rewards in RL tasks, and the \sjirp algorithm that is able to infer an SRM representation of the environment reward based on traces, and use it to learn the optimal policy. We have shown \sjirp is an improvement over prior methods.




%
%
%
\FloatBarrier
\bibliographystyle{splncs04}
\bibliography{references}
%

\clearpage
\appendix

\section{Additional results}
\label{sec:additional-results}

Figure~\ref{figure:results-deep} shows the comparison between \sjirp and standard deep reinforcement learning algorithms, DDQN and DHRL (deep hierarchical reinforcement learning). The DDQN agent was provided the past $200$ labels as inputs to the network. The DHRL agent was based on an implementation provided in Toro Icarte et al.~\cite{icarte_rms}, who expanded the HRL options framework for use with reward machines (we note that DHRL in this case assumes access to SRM structure in order to generate options).

\sjirp significantly outperforms the alternatives which is in line with similar results for the non-stochastic setting~\cite{DBLP:conf/aaai/NeiderGGT0021}.

\begin{figure}[h]
    \centering
    \begin{subfigure}{0.5\textwidth}
        \centering
        \input{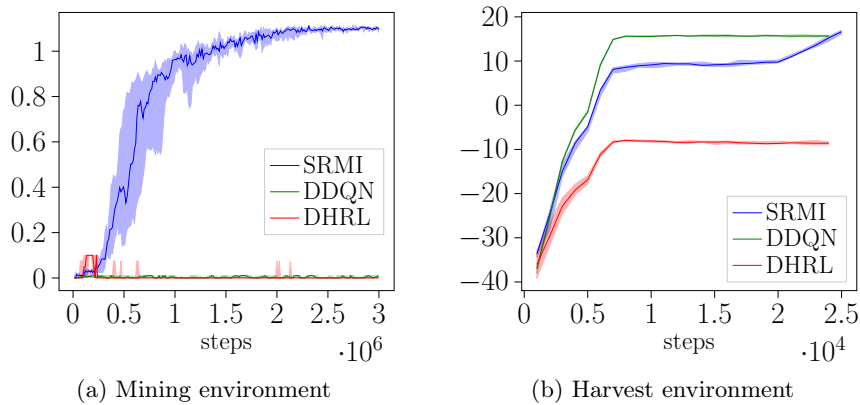}
        \caption{Mining environment}
        \label{figure:mining-deep}
    \end{subfigure}%
    \begin{subfigure}{0.5\textwidth}
        \centering
\begin{tikzpicture}[scale=0.65]
  \begin{axis}[
      legend cell align={left},
      legend style={
          fill opacity=0.8,
          draw opacity=1,
          text opacity=1,
          at={(0.97,0.03)},
          anchor=south east,
          draw=white!80!black,
          font=\Large,
        },
      tick align=outside,
      tick pos=left,
      label style={font=\Large},
      tick label style={font=\LARGE},
      unbounded coords=jump,
      x grid style={white!69.0196078431373!black},
      xlabel={steps},
      xmin=-200, xmax=26200,
      xtick style={color=black},
      y grid style={white!69.0196078431373!black},
      ymin=-41.9557932875571, ymax=21,
      ytick style={color=black},
      try min ticks=6,
    ]
    \path [draw=blue, fill=blue, opacity=0.3, line width=0pt]
    (axis cs:1000,-33.4776839029404)
    --(axis cs:1000,-34.3341752203673)
    --(axis cs:2000,-28.2134537609499)
    --(axis cs:3000,-16.2275746936035)
    --(axis cs:4000,-10.3342009667016)
    --(axis cs:5000,-6.0695793182824)
    --(axis cs:6000,1.95690201246757)
    --(axis cs:7000,7.29426122299896)
    --(axis cs:8000,7.68469739975503)
    --(axis cs:9000,8.58792202227681)
    --(axis cs:10000,8.46617130585709)
    --(axis cs:11000,8.6999716631172)
    --(axis cs:12000,9.10678591399406)
    --(axis cs:13000,9.0249833750986)
    --(axis cs:14000,8.89780171753073)
    --(axis cs:15000,8.77278089123219)
    --(axis cs:16000,8.75423283961682)
    --(axis cs:17000,8.99214281215331)
    --(axis cs:18000,9.27717802858462)
    --(axis cs:19000,9.39618200070208)
    --(axis cs:20000,9.57189264752654)
    --(axis cs:21000,10.802451994585)
    --(axis cs:22000,12.1212482864986)
    --(axis cs:23000,13.2388315262805)
    --(axis cs:24000,14.5052889698947)
    --(axis cs:25000,16.0889509960776)
    --(axis cs:25000,16.9658657511427)
    --(axis cs:25000,16.9658657511427)
    --(axis cs:24000,15.6535439816086)
    --(axis cs:23000,14.1496852009133)
    --(axis cs:22000,12.6356644038544)
    --(axis cs:21000,11.4374753139366)
    --(axis cs:20000,10.2186127436962)
    --(axis cs:19000,10.1016209195215)
    --(axis cs:18000,10.0462362732652)
    --(axis cs:17000,10.1982146725622)
    --(axis cs:16000,9.45000694785748)
    --(axis cs:15000,9.44756336670289)
    --(axis cs:14000,9.78188851254322)
    --(axis cs:13000,9.61480872461998)
    --(axis cs:12000,9.60207552398239)
    --(axis cs:11000,9.6572567220847)
    --(axis cs:10000,9.65796728611238)
    --(axis cs:9000,9.40429301723693)
    --(axis cs:8000,8.92589741691665)
    --(axis cs:7000,8.52235555206536)
    --(axis cs:6000,3.60830992368642)
    --(axis cs:5000,-4.71964122288403)
    --(axis cs:4000,-8.25298004080319)
    --(axis cs:3000,-13.294989934717)
    --(axis cs:2000,-23.7833398355271)
    --(axis cs:1000,-33.4776839029404)
    --cycle;
    
    \path [draw=green!50.1960784313725!black, fill=green!50.1960784313725!black, opacity=0.3, line width=0pt]
    (axis cs:1000,-36.3049221155328)
    --(axis cs:1000,-38.0381799718422)
    --(axis cs:2000,-26.0309281268133)
    --(axis cs:3000,-13.0794174872451)
    --(axis cs:4000,-5.88336122045)
    --(axis cs:5000,-1.68727210324269)
    --(axis cs:6000,8.802256433513)
    --(axis cs:7000,14.8245063141015)
    --(axis cs:8000,15.5024827342533)
    --(axis cs:9000,15.4343397826083)
    --(axis cs:10000,15.4870191618786)
    --(axis cs:11000,15.5717991063952)
    --(axis cs:12000,15.7401337086019)
    --(axis cs:13000,15.6020251804545)
    --(axis cs:14000,15.6036549592276)
    --(axis cs:15000,15.5850309542345)
    --(axis cs:16000,15.4294761078451)
    --(axis cs:17000,15.5467389303332)
    --(axis cs:18000,15.541227771598)
    --(axis cs:19000,15.6869046118217)
    --(axis cs:20000,15.5058573308475)
    --(axis cs:21000,15.4415130417963)
    --(axis cs:22000,15.5224403634153)
    --(axis cs:23000,15.4218531350241)
    --(axis cs:24000,15.4876873230251)
    --(axis cs:24000,15.7546399337403)
    --(axis cs:24000,15.7546399337403)
    --(axis cs:23000,15.8644248372185)
    --(axis cs:22000,15.9200907395623)
    --(axis cs:21000,15.9036453861337)
    --(axis cs:20000,15.7805322492574)
    --(axis cs:19000,15.9039781716355)
    --(axis cs:18000,15.8410702853105)
    --(axis cs:17000,15.7529170957492)
    --(axis cs:16000,15.9327927612953)
    --(axis cs:15000,15.9481379282462)
    --(axis cs:14000,15.8891321124476)
    --(axis cs:13000,15.7804155838631)
    --(axis cs:12000,15.8214004716165)
    --(axis cs:11000,15.7709708590504)
    --(axis cs:10000,15.7813838824291)
    --(axis cs:9000,15.7109831326434)
    --(axis cs:8000,15.6828116605401)
    --(axis cs:7000,14.9602711719982)
    --(axis cs:6000,9.31011385423713)
    --(axis cs:5000,-1.09327960940262)
    --(axis cs:4000,-5.3932007771806)
    --(axis cs:3000,-12.417422774311)
    --(axis cs:2000,-24.8903444122333)
    --(axis cs:1000,-36.3049221155328)
    --cycle;
    
    \path [draw=red, fill=red, opacity=0.3, line width=0pt]
    (axis cs:1000,-32.85)
    --(axis cs:1000,-39.15)
    --(axis cs:2000,-31.85)
    --(axis cs:3000,-24.625)
    --(axis cs:4000,-20.325)
    --(axis cs:5000,-17.775)
    --(axis cs:6000,-11.7)
    --(axis cs:7000,-8.5)
    --(axis cs:8000,-8.1)
    --(axis cs:9000,-8.175)
    --(axis cs:10000,-8.425)
    --(axis cs:11000,-8.45)
    --(axis cs:12000,-8.575)
    --(axis cs:13000,-8.6)
    --(axis cs:14000,-8.375)
    --(axis cs:15000,-8.875)
    --(axis cs:16000,-8.675)
    --(axis cs:17000,-8.775)
    --(axis cs:18000,-8.775)
    --(axis cs:19000,-8.8)
    --(axis cs:20000,-8.775)
    --(axis cs:21000,-8.675)
    --(axis cs:22000,-8.825)
    --(axis cs:23000,-8.975)
    --(axis cs:24000,-8.95)
    --(axis cs:24000,-8.3)
    --(axis cs:24000,-8.3)
    --(axis cs:23000,-8)
    --(axis cs:22000,-8.1)
    --(axis cs:21000,-8.325)
    --(axis cs:20000,-8.175)
    --(axis cs:19000,-8.225)
    --(axis cs:18000,-8.25)
    --(axis cs:17000,-8.175)
    --(axis cs:16000,-8.1)
    --(axis cs:15000,-8.125)
    --(axis cs:14000,-8.15)
    --(axis cs:13000,-8)
    --(axis cs:12000,-8.2)
    --(axis cs:11000,-8.025)
    --(axis cs:10000,-8)
    --(axis cs:9000,-7.825)
    --(axis cs:8000,-7.825)
    --(axis cs:7000,-8.025)
    --(axis cs:6000,-10.725)
    --(axis cs:5000,-15.925)
    --(axis cs:4000,-17.875)
    --(axis cs:3000,-20.95)
    --(axis cs:2000,-26.65)
    --(axis cs:1000,-32.85)
    --cycle;
    
    \addplot [blue]
    table {%
        1000 -33.6219029881669
        2000 -24.7001991455783
        3000 -14.9997281081287
        4000 -8.60274136587366
        5000 -4.86303861956386
        6000 3.42132595574791
        7000 8.11263264697306
        8000 8.58873952711459
        9000 8.92496475690567
        10000 9.12706943988719
        11000 9.43136285617957
        12000 9.35733057962195
        13000 9.3672901350834
        14000 9.02210086089884
        15000 9.15722428380582
        16000 9.27111716424087
        17000 9.39056519334998
        18000 9.50600741153732
        19000 9.73912947578224
        20000 9.76183686182827
        21000 10.941314929349
        22000 12.2683333001698
        23000 13.6202348436829
        24000 15.2749273039358
        25000 16.634270166864
      };
    \addlegendentry{\sjirp}
    \addplot [green!50.1960784313725!black]
    table {%
        0 nan
        1000 -37.0084012269701
        2000 -25.3610251390192
        3000 -12.7447273317827
        4000 -5.72230062796195
        5000 -1.54091864923124
        6000 9.18304579038979
        7000 14.8944030366159
        8000 15.6280205041698
        9000 15.5781981195257
        10000 15.56221220798
        11000 15.6946127489678
        12000 15.7851478059513
        13000 15.6432252566883
        14000 15.7529364083316
        15000 15.7011201504393
        16000 15.765628448745
        17000 15.6457898890395
        18000 15.6303847299614
        19000 15.7425300441258
        20000 15.7204799763703
        21000 15.6394346223344
        22000 15.6629034057981
        23000 15.6624636688286
        24000 15.6967115749749
      };
    \addlegendentry{DDQN}
    \addplot [red]
    table {%
        0 nan
        1000 -36.05
        2000 -29.65
        3000 -22.85
        4000 -19.25
        5000 -16.85
        6000 -11.1
        7000 -8.35
        8000 -7.95
        9000 -8.1
        10000 -8.1
        11000 -8.2
        12000 -8.4
        13000 -8.4
        14000 -8.3
        15000 -8.35
        16000 -8.3
        17000 -8.5
        18000 -8.6
        19000 -8.7
        20000 -8.65
        21000 -8.5
        22000 -8.55
        23000 -8.6
        24000 -8.6
      };
    \addlegendentry{DHRL}
  \end{axis}
  
\end{tikzpicture}
        \caption{Harvest environment}
        \label{figure:harvest-deep}
    \end{subfigure}
    \caption{Comparison of \sjirp with DDQN and DHRL}
    \label{figure:results-deep}
\end{figure}

Note that, given a bounded symmetric distribution $D$, midpoint $\mu'$ is an unbiased estimator for $\mathbb{E}[D]$. We use it because it also works as a range estimate for $D$ that respects $\epsilon_c$-consistency, whereas the empirical average can break it. Asymmetric distributions complicate exposition, but in that case we do use the empirical average for estimating $\mathbb{E}[D]$ (see Algorithm~\ref{algorithm:estimates-asymmetric} in Appendix~\ref{sec:asymmetric-noise}).

\section{Asymmetric noise distributions}
\label{sec:asymmetric-noise}

In Algorithm~\ref{algorithm:sjirp} (\sjirp) we have assumed that rewards follow symmetric distributions, more precisely we assumed $\E[\distributionClosed{a}{b}] = (a + b)/2$ holds for all reward distributions in the environment SRM. Algorithms~\ref{algorithm:sjirp-asymmetric}~and~\ref{algorithm:estimates-asymmetric} show how to extend \sjirp towards dropping this assumption.

\begin{algorithm}
    \SetAlgoLined
    Initialize SRMs $\hypothesisRM$, $\hypothesisRMG$ with a set of states $\mealyStates$\;
    Initialize a set of q-functions $\setOfQFunctions = \set{\qFunction^{\mealyCommonState} | \mealyCommonState \in \mealyStates}$\;
    Initialize $\counterexamples = \emptyset$ and $\setOfSeenRewards = \emptyset$\;
    \For{episode $n = 1, 2, \ldots$}{
        $(\inputTrace, \outputTrace, \setOfQFunctions) \gets$ QRM\_episode$(\hypothesisRMG, \setOfQFunctions)$\;
        add $(\inputTrace, \outputTrace)$ to $\setOfSeenTraces$\;

        \If{$\hypothesisRM$ not $\dispersionParameter$-consistent with $(\inputTrace, \outputTrace)$}{
            add $(\inputTrace, \outputTrace)$ to $\counterexamples$\;
            \eIf{found SRM $\hypothesisRMZ$ isomorphic with $\hypothesisRM$ and $\dispersionParameter$-consistent with $\counterexamples$} {
                $\hypothesisRM ' \gets \hypothesisRMZ$\;
            }{
                infer $\hypothesisRM '$ from $\counterexamples$\;
            }
            $(\hypothesisRM, \hypothesisRMG) \gets$ \texttt{Estimates} $(\hypothesisRM ', \setOfSeenTraces)$\;
            reinitialize $\setOfQFunctions$\;
        }
    }
    \caption{\sjirp (asymmetric rewards)}
    \label{algorithm:sjirp-asymmetric}
\end{algorithm}

\begin{algorithm}
    \SetAlgoLined
    Initialize sets $\estimatesSet(\mealyCommonState, \mdpLabel)$ for states $\mealyCommonState \in \mealyStates$ and $\mdpLabel \in \mealyInputAlphabet$\;
    Initialize empty output functions $\mealyOutput '$, $\hat{\mealyOutput} $\;
    \For{$(\inputTrace, \outputTrace) \in \setOfSeenTraces$}{
        Skip $(\inputTrace, \outputTrace)$ if not $\dispersionParameter$-consistent with $\hypothesisRM$\;
        Simulate a run of $\hypothesisRM$ on $\inputTrace$ disregarding its outputs and record rewards from $\outputTrace$ in corresponding $\estimatesSet(\mealyCommonState, \mdpLabel)$ sets\;
        \For{$\mealyCommonState \in V$, $\mdpLabel \in \mealyInputAlphabet$}{
            $\mean ' \gets (\max \estimatesSet(\mealyCommonState, \mdpLabel) + \min \estimatesSet(\mealyCommonState, \mdpLabel))/2$\;
            $\hat{\mean} \gets \E[\estimatesSet(\mealyCommonState, \mdpLabel)]$\;
            Set $\mealyOutput '(\mealyCommonState, \mdpLabel) = U[\mean ' - \dispersionParameter, \mean ' + \dispersionParameter]$\;
            Set $\hat{\mealyOutput} (\mealyCommonState, \mdpLabel) = U[\hat{\mean} - \dispersionParameter, \hat{\mean} + \dispersionParameter]$\;
        }
    }
    Return $\hypothesisRM$, $\hypothesisRMG$ constructed from the input SRM with $\mealyOutput '$, $\hat{\mealyOutput}$ as output functions\;
    \caption{Estimates (asymmetric rewards)}
    \label{algorithm:estimates-asymmetric}
\end{algorithm}

Symmetric reward distributions allow one to use a single number to store information about both the mean of a hypothesis output, and its range. In other words, if $\mean$ is the mean estimate for some hypothesis output $\outputDistribution = \distributionClosed{a}{b}$ ($b - a = 2  \dispersionParameter$), one can safely set $\outputDistribution = \uniformClosed{\mean - \dispersionParameter}{\mean + \dispersionParameter}$. However, if rewards can be asymmetric then $\mean$ is not necessarily the center of $[a, b]$, and setting $\outputDistribution = \uniformClosed{\mean - \dispersionParameter}{\mean + \dispersionParameter}$ can break the detection of counterexamples.

In Algorithm~\ref{algorithm:sjirp-asymmetric} solve this problem by using an auxiliary hypothesis $\hypothesisRMG$. This is a purely technical choice, one may also implement a richer representation of SRMs that can store both pieces of information and thus eliminate the need for an auxiliary hypothesis.
The main hypothesis $\hypothesisRM$ has outputs $\outputDistribution = \distributionClosed{a}{b}$ that are always $\dispersionParameter$-consistent with $\counterexamples$, but do not necessarily follow the best available mean estimates. On the other hand, the auxiliary hypothesis $\hypothesisRMG$ eschews consistency in favor of using the best estimates. QRM is performed over the auxiliary hypothesis which captures information about the environment reward that is most immediately beneficial for learning the optimal policy. Hypothesis improvement (and counterexample detection) is performed over the main hypothesis.

The \emph{Estimates} function now returns both a new consistent hypothesis $\hypothesisRM$, and the new auxiliary hypothesis $\hypothesisRMG$. They are identical save for using different estimators. Using the arithmetic mean for $\hypothesisRMG$ ensures the SRM outputs used in QRM are unbiased when empirical rewards are sampled from asymmetric distributions. Using the mid-range for $\hypothesisRM$ preserves $\dispersionParameter$-consistency.


\section{Proofs}
\label{sec:proofs}

\subsection{Proof of Lemma~\ref{lemma:same-policies} and Theorem~\ref{theorem:convergence}}

To prove Lemma~\ref{lemma:same-policies} and Theorem~\ref{theorem:convergence} we first introduce an equivalence relation $\equivGraph$ over SRMs: for $\machine = (\mealyStates, \mealyInit, \mealyInputAlphabet, \mealyOutputAlphabet, \mealyTransition, \mealyOutput)$ and $\machineB = (\mealyStates ', \mealyInit ', \mealyInputAlphabet, \mealyOutputAlphabet ', \mealyTransition ', \mealyOutput ')$, $\machine \equivGraph \machineB$ iff they are graph-isomorphic, i.e. if there exists a bijection $\isomorphism : \mealyStates \to \mealyStates '$ s.t. $\isomorphism(\mealyInit) = \mealyInit '$ and $\mealyTransition ' (\isomorphism(\mealyCommonState), \mdpLabel)  = \isomorphism(\mealyTransition(\mealyCommonState, \mdpLabel))$ for all $\mealyCommonState \in \mealyStates$ and $\mdpLabel \in \mealyInputAlphabet$.
For isomorphism $\isomorphism : \machine \mapsto \machineB$ we define a function $\isomorphismFix : V \times \mealyInputAlphabet \to \reals$ that "fixes" the outputs of $\machine$ so that it is equivalent in expectation to $\machineB$ i.e. $\E[\mealyOutput(\mealyCommonState, \mdpLabel) + \isomorphismFix(\mealyCommonState, \mdpLabel)] = \E[\mealyOutput '(\isomorphism(\mealyCommonState), \mdpLabel)]$.

\subsubsection{Proof of Lemma~\ref{lemma:same-policies}}

Lemma~\ref{lemma:same-policies} states that machines that are equivalent in expectation induce the same optimal policy over an MDP.

\begin{proof}
    Let SRMs $\machine = (\mealyStates^A, {\mealyInit}^A, \mealyInputAlphabet, \mealyOutputAlphabet^A, \mealyTransition^A, \mealyOutput^A)$ and $\machineB = (\mealyStates^B, \mealyInit^B, \mealyInputAlphabet, \mealyOutputAlphabet^B, \mealyTransition^B,\allowbreak \mealyOutput^B)$ be equivalent in expectation.
    
    First, assume that $\machine$ and $\machineB$ are graph-isomorphic with $I : \machine \mapsto \machineB$.
    We now define a new SRM $\machine '$ which shares the state set and transition function with $\machine$, but for a given state $\mealyCommonState \in \mealyStates^A$ and label $\mdpLabel$, $\machine '$ outputs the corresponding distribution from $\machineB$:
    let $\mealyOutput(\mealyCommonState, \mdpLabel) = \mealyOutput^B(\isomorphism(\mealyCommonState), \mdpLabel))$ and define $\machine ' = (\mealyStates^A, \mealyInit^A, \mealyInputAlphabet, \mealyOutputAlphabet^B, \mealyTransition^A, \mealyOutput)$.
    Because $\machine \equivGraph \machineB$, we have $\machineB \equivGraph \machine '$.
    Furthermore, $\machineB$ and $\machine '$ are equivalent in the sense of CDFs that they output. Because $\machineB$ is just $\machine '$ with different objects for its states (given by $\isomorphism$), they must induce the same optimal policy.
    For $\machine$ and $\machine '$, the argument is that for every policy $\policy((\mdpCommonState, \mealyCommonState), \mdpCommonAction)$ the Q-functions $\qFunction_{\policy}^{\machine}$ and $\qFunction_{\policy}^{\machine '}$ (induced by $\machine$ and $\machine '$ respectively) in the cross-product MDP are identical by definition, as they only care about expected reward: the process of obtaining $\machine '$ by overwriting outputs in $\machine$ preserves expected values.
    To conclude, when $\machine$ and $\machineB$ are equivalent in expectation and graph-isomorphic, they induce the same optimal policy.
    
    The general case when $\machine$ and $\machineB$ are equivalent in expectation but not graph-isomorphic is obtained by dropping down to classical non-stochastic reward machines $\hat{\machine}$ and $\hat{\machineB}$ where equivalence in expectation implies equivalence. $\hat{\machine}$ ($\hat{\machineB}$) is constructed by overwriting outputs in $\machine$ ($\machineB$) with their expected values. Previous reasoning then establishes that $\machine$ and $\hat{\machine}$, since they are equivalent in expectation and graph isomorphic, induce the same optimal policy (similarly for $\machineB$ and $\hat{\machineB}$). Finally, $\machine$ and $\machineB$ induce the same optimal policy because $\hat{\machine}$ and $\hat{\machineB}$ do.
\end{proof}

\subsubsection{Inference of SRMs}

Before proving Theorem~\ref{theorem:convergence} we show that \sjirp finds the counterexamples necessary to improve its hypothesis. First, we formalize the notion of attainable trajectories in Definition~\ref{def:m-attainable}.

\begin{definition}
    \label{def:m-attainable}
    Let $\mdp = (\mdpStates, \mdpInit, \mdpActions, \mdpProb, \rmLabels, \rmLabelingFunction)$ be a labeled MDP and $\maxLengthEpisode \in \nat$ a natural number. We call a trajectory
    $\mdpTrajectory = \trajectory{k} \in (\mdpStates \times \mdpActions)^{*} \times \mdpStates$\; \textbf{$\maxLengthEpisode$-attainable} if (i) $k \leq \maxLengthEpisode$ and (ii) $\mdpProb(\mdpCommonState_i, \mdpCommonAction_i, \mdpCommonState_{i+1}) > 0$ for each $i \in \set{0,\ldots,k}$.
    Moreover, we say that a trajectory $\mdpTrajectory$ is attainable if there exists an $\maxLengthEpisode \in \nat$ such that $\mdpTrajectory$ is $\maxLengthEpisode$-attainable.
\end{definition}

We also refer to label sequences generated by attainable trajectories as attainable label sequences.

\begin{lemma}
    \label{lemma:m-attainable}
    \sjirp with episode length $\maxLengthEpisode$ explores every $\maxLengthEpisode$-attainable trajectory with a non-zero probability.
\end{lemma}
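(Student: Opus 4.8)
The plan is to exploit the exploration floor guaranteed by the $\epsilon$-greedy strategy invoked in Theorem~\ref{theorem:convergence}: every action is selected with strictly positive probability at every step, \emph{independently} of the current hypothesis SRM and its associated Q-functions. First I would fix an $m$-attainable trajectory $\zeta = s_0 a_1 s_1 \ldots a_k s_k$ with $k \leq m$, noting that we read attainability relative to the agent's start, so $s_0 = \mdpInit$ and an episode can begin at the head of $\zeta$. The key structural observation is that \sjirp's per-step behaviour factors into two independent sources of randomness: the agent's action selection and the environment's stochastic transition.

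Next I would lower-bound the probability contributed by each prefix step. At step $i$ (for $1 \leq i \leq k$), conditioned on the agent being in state $s_{i-1}$, the $\epsilon$-greedy rule selects the specific action $a_i$ with probability at least $\epsilon / |A| > 0$, since even the uniform random branch of $\epsilon$-greedy assigns positive mass to $a_i$; crucially this bound does not depend on which greedy policy the current Q-functions induce. Given that $a_i$ is chosen, the environment moves to $s_i$ with probability $p(s_{i-1}, a_i, s_i)$, which is strictly positive by condition (ii) of Definition~\ref{def:m-attainable}.

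Then I would multiply these per-step bounds over the whole trajectory. Because the action choices and transitions at distinct steps are conditionally independent given the history, the probability that a single episode traces exactly $\zeta$ is at least
\begin{equation*}
    \prod_{i=1}^{k} \frac{\epsilon}{|A|}\, p(s_{i-1}, a_i, s_i),
\end{equation*}
a finite product (finite since $k \leq m$) of strictly positive factors, hence strictly positive. As the episode length is $m \geq k$, the trajectory $\zeta$ fits inside a single episode, so \sjirp realizes it with non-zero probability in every episode; combined with the infinitely many episodes of a run, this is exactly what later arguments need in order to guarantee that each required counterexample is eventually observed.

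The argument is essentially a routine product-of-positive-probabilities computation, so there is no deep obstacle. The only points that require care are (a) making explicit that exploration must originate at $\mdpInit$, so that the lemma is understood as a statement about trajectories rooted at the initial state, and (b) stressing that the $\epsilon/|A|$ floor is uniform across the ever-changing hypotheses and Q-functions produced by \sjirp — it is precisely this uniformity that lets the same strictly positive lower bound hold in every episode irrespective of how the hypothesis is being updated.
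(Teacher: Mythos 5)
Your proposal is correct and takes essentially the same approach as the paper: the paper's own proof is a one-sentence deferral to an induction over trajectory length (citing Xu et al.), and that induction formalizes exactly your product-of-positive-probabilities computation, with the uniform $\epsilon$-greedy floor $\epsilon/|\mdpActions|$ on action selection multiplied against the positive transition probabilities guaranteed by Definition~\ref{def:m-attainable}. The only difference is presentational (a direct finite product versus an induction organizing the same product), so there is nothing substantive to add.
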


Lemma~\ref{lemma:m-attainable} is proved with induction over the length of trajectories. The proof for an equivalent result can be found in Xu et al.~\cite{jirp}.

\begin{corollary}
    \label{corollary:infinitely-often}
    \sjirp with episode length $\maxLengthEpisode$ explores every $\maxLengthEpisode$-attainable trajectory infinitely often.
\end{corollary}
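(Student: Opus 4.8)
The plan is to upgrade the single-episode guarantee of Lemma~\ref{lemma:m-attainable} into an almost-sure infinitely-often statement by exhibiting a positive lower bound on the per-episode exploration probability that is \emph{uniform} across episodes, and then invoking a conditional Borel--Cantelli argument.

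First I would fix an arbitrary $\maxLengthEpisode$-attainable trajectory $\mdpTrajectory = \trajectory{k}$, so that $k \leq \maxLengthEpisode$ and every transition along it satisfies $\mdpProb(\mdpCommonState_i, \mdpCommonAction_{i+1}, \mdpCommonState_{i+1}) > 0$. The central observation is that, regardless of the current hypothesis SRM and the current Q-functions, the $\epsilon$-greedy strategy assigns probability at least $\epsilon / |\mdpActions|$ to each individual action through its uniform random-exploration component. Hence the probability that a single episode realizes exactly $\mdpTrajectory$ is bounded below by
\[
    \prod_{i=0}^{k-1} \frac{\epsilon}{|\mdpActions|}\, \mdpProb(\mdpCommonState_i, \mdpCommonAction_{i+1}, \mdpCommonState_{i+1}) \;\geq\; \left( \frac{\epsilon\, p_{\min}}{|\mdpActions|} \right)^{\maxLengthEpisode} \;=:\; p_0 \;>\; 0,
\]
where $p_{\min} > 0$ is the smallest positive transition probability in $\mdp$ and the inequality uses $k \leq \maxLengthEpisode$ together with $\epsilon\, p_{\min}/|\mdpActions| \leq 1$. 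Crucially, $p_0$ depends neither on the episode index nor on the evolving policy.

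Next I would make the dependence across episodes precise. Let $\mathcal{F}_n$ denote the history (all recorded traces, counterexamples, hypotheses, and Q-functions) up to and including episode $n$, and let $A_n$ be the event that episode $n$ explores $\mdpTrajectory$. The per-episode bound above shows $\Pr[A_n \mid \mathcal{F}_{n-1}] \geq p_0$ almost surely for every $n$, so $\sum_{n} \Pr[A_n \mid \mathcal{F}_{n-1}] = \infty$ almost surely. By Lévy's conditional extension of the second Borel--Cantelli lemma, the event $\{A_n \text{ infinitely often}\}$ then holds almost surely; that is, \sjirp\ explores $\mdpTrajectory$ infinitely often. Since $\mdpTrajectory$ was an arbitrary $\maxLengthEpisode$-attainable trajectory and there are only finitely many of them, the conclusion holds simultaneously for all of them.

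The main obstacle will be the absence of independence between episodes: the policy, and therefore the action-selection distribution, changes from episode to episode as the hypothesis is refined, so the classical Borel--Cantelli lemma does not apply directly. The remedy --- and the step that needs the most care --- is to verify that the $\epsilon$-greedy exploration floor yields a lower bound on the \emph{conditional} exploration probability that is genuinely uniform in $n$, after which the conditional (Lévy) form of Borel--Cantelli closes the argument. I would also confirm that the reinitialization of the Q-functions whenever the hypothesis changes does not disturb this floor, which is immediate, since the bound $p_0$ relies only on the random-exploration component and never on the greedy one.
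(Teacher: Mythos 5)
Your proof is correct and follows the same route as the paper, whose entire justification is a single sentence: by Lemma~\ref{lemma:m-attainable} the per-episode probability of observing any $\maxLengthEpisode$-attainable trajectory is non-zero, so in the limit the probability of never seeing it again falls to zero and it is observed infinitely often with probability $1$. The only difference is one of rigor: you make explicit the uniform-in-$n$ exploration floor $p_0$ coming from the $\epsilon$-greedy component and invoke the conditional (L\'evy) Borel--Cantelli lemma to handle the dependence between episodes induced by the evolving hypotheses and Q-functions --- both points that the paper's argument needs but leaves implicit.
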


Corollary~\ref{corollary:infinitely-often} follows from Lemma~\ref{lemma:m-attainable} because in the limit, the probability of not observing a trajectory falls to zero, so it will be observed again with probability $1$.

Lemma~\ref{lemma:hard-counterexamples} provides a lower bound on maximal episode length which allows \sjirp to observe sufficient evidence of \emph{structural} dissimilarity between the hypothesis and the environment SRM. It states that if there are attainable trajectories which can generate a counterexample of Type \hardCx, then there exists a Type \hardCx counterexample that "fits" into an episode.

\begin{lemma}
    \label{lemma:hard-counterexamples}
    For MDP $\mdp$ and environment SRM $\environmentRM$ (with dispersion $\dispersionParameter$), let $\hypothesisRM$ be the current hypothesis \sjirp is using to explore $\mdp$. Let $\mdpTrajectory = \trajectory{k}$ be an attainable trajectory in $\mdp$, $\inputTrace = \labelSequence{k}$ the corresponding label sequence, and $\environmentRM(\labelSequence{k}) = \distributionSequence{k}$ (considered as a sequence of random variables) the output of the environment SRM over $\inputTrace$. If $(\labelSequence{k}, \distributionSequence{k})$ is a plausible Type \hardCx counterexample, that is if for all $\hypothesisRMG \equivGraph \hypothesisRM$ (including $\hypothesisRM$) we have $\Pr(\hypothesisRMG \; \dispersionParameter \textrm{-inconsistent with } (\inputTrace, \outputTrace) \; | \; \outputTrace \sim \distributionSequence{k}) > 0$, then \sjirp with episode length $\maxLengthEpisode \geq 2^{|\mdp| + 1}(|\environmentRM| + 1) - 1$ will eventually observe a Type \hardCx counterexample.
\end{lemma}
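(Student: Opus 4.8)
The plan is to separate a purely combinatorial claim about the \emph{existence} of a short witnessing trajectory from the probabilistic claim that \sjirp\ actually \emph{observes} it. First I would state and reduce to the combinatorial claim: if there is any attainable trajectory whose label sequence, paired with the environment outputs $\environmentRM(\inputTrace)$, is a plausible Type \hardCx\ counterexample, then there is such a trajectory of length at most $\maxLengthEpisode = 2^{|\mdp|+1}(|\environmentRM|+1) - 1$. Granting this, the lemma follows quickly: by Corollary~\ref{corollary:infinitely-often}, \sjirp\ with episode length $\maxLengthEpisode$ explores the short witnessing trajectory infinitely often; by the plausibility hypothesis applied to the current $\hypothesisRM$ and to every isomorphic variant $\hypothesisRMG$ that the inner test of Algorithm~\ref{algorithm:sjirp} might certify, each such exploration produces a sampled trace that is $\dispersionParameter$-inconsistent with all of them with probability bounded below by a fixed constant. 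A Borel--Cantelli argument then yields a Type \hardCx\ counterexample with probability $1$ (as long as $\hypothesisRM$ remains active).

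Next I would set up the finite product structure on which the pumping runs. Type \hardCx-ness is a property of label sequences together with their environment outputs, so the natural state space is a product of (i) a deterministic automaton over the alphabet $2^\rmLabels$ recognising \emph{attainable} label sequences, obtained from $\mdp$ by the subset construction (this is where the factor $2^{|\mdp|}$ originates, and it is what lets ``the excised label sequence is still attainable'' be read off as equality of automaton states rather than of individual trajectories), and (ii) the state of the environment SRM $\environmentRM$, which fixes the output distributions $\distributionSequence{k}$ on any retained suffix. The extra factor of $2$ and the $+1$ terms account for a sink/terminal component and for a single bit recording whether the position that witnesses infeasibility has already been passed. Counting configurations gives the bound $2^{|\mdp|+1}(|\environmentRM|+1)$, so any witnessing label sequence longer than $\maxLengthEpisode$ must revisit a configuration.

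I would then carry out the excision. Take a shortest attainable, plausibly-Type-\hardCx\ label sequence and suppose for contradiction that its length exceeds $\maxLengthEpisode$. By pigeonhole two positions $i<j$ carry the same configuration; cutting the segment between them produces a strictly shorter sequence that (because the subset-construction component agrees at $i$ and $j$) is still attainable, and (because the $\environmentRM$-component agrees) carries exactly the same environment output behaviour on its retained suffix. The heart of the argument is to show the cut sequence remains a plausible Type \hardCx\ counterexample: since the ``distinguishing-position-passed'' bit agrees at $i$ and $j$, the loop removed does not contain the position responsible for the infeasibility, so the positive-probability obstruction to $\dispersionParameter$-consistency survives for every $\hypothesisRMG \equivGraph \hypothesisRM$, together with the fixed constraints coming from $\counterexamples_{i-1}$. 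This contradicts minimality and proves the combinatorial claim.

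The main obstacle I anticipate is precisely this last preservation step. Unlike plain functional inequivalence in the deterministic \jirp\ setting, Type \hardCx-ness asserts the \emph{non-existence} of \emph{any} $\dispersionParameter$-consistent output assignment on the fixed graph structure of $\hypothesisRM$ --- a statement quantified over all graph-isomorphic hypotheses and taken with respect to \emph{random} rewards, and moreover entangled with the accumulated constraints in $\counterexamples_{i-1}$ through the encoding of Task~\ref{task:one}. The delicate work is to design the configuration (and in particular to justify the conservative subset construction and the extra bit) so that it records exactly the information about the hypothesis run and the active consistency constraints needed to guarantee that the infeasibility certified before the cut cannot be accidentally repaired by removing the loop, while keeping the configuration count within the stated bound. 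I expect that verifying this invariant --- rather than the pigeonhole or the final Borel--Cantelli step --- will absorb most of the effort.
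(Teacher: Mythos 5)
Your high-level decomposition coincides with the paper's: reduce to the existence of a short witnessing sequence, then invoke Corollary~\ref{corollary:infinitely-often} to get eventual observation. The genuine gap is in the combinatorial core, and it is exactly the step you flag but do not resolve. Your pumping argument needs an invariant guaranteeing that "plausible Type~\hardCx-ness" survives loop excision, and you propose to track it with "a single bit recording whether the position that witnesses infeasibility has already been passed." That bit is not well-defined: Type~\hardCx-ness is the \emph{unsatisfiability} of a global system of constraints --- ranging over all real-valued output assignments on the fixed transition graph of $\hypothesisRM$, entangled with the accumulated constraints from $\counterexamples$ --- and unsatisfiability of such a system is not attributable to any single position of the trace, so there is nothing for the bit to record. (Your product configuration also omits the state of $\hypothesisRM$ itself, which determines which consistency constraint each position generates; without it, agreement of configurations at positions $i<j$ does not even guarantee that the constraints after the cut are unchanged.) A secondary issue: your Borel--Cantelli step needs a uniform positive probability that a single sampled trace is $\dispersionParameter$-inconsistent with \emph{all} $\hypothesisRMG \equivGraph \hypothesisRM$ simultaneously, whereas the lemma's hypothesis only gives positive probability per $\hypothesisRMG$; with uncountably many output assignments this quantifier exchange needs an argument.

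The paper's proof avoids the preservation problem entirely by flipping the quantifier. Assuming for contradiction that a long plausible Type~\hardCx counterexample exists but none of length $\leq \maxLengthEpisode$, the absence of short Type~\hardCx counterexamples yields a \emph{single} SRM $\hypothesisRMG \equivGraph \hypothesisRM$ with $\hypothesisRMG \equivExpect \environmentRM$ on all label sequences of length $\leq \maxLengthEpisode$. Replacing both $\hypothesisRMG$ and $\environmentRM$ by deterministic reward machines that output expected values gives two deterministic machines agreeing on all attainable sequences of length $\leq \maxLengthEpisode$ yet disagreeing on the long sequence, contradicting the automata-theoretic distinguishing-sequence bound of Xu et al.~\cite{jirp} --- which is precisely where the constant $2^{|\mdp|+1}(|\environmentRM|+1)-1$ originates. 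After this reduction, the object being pumped is \emph{disagreement of two deterministic machines}, which is a positional property (the first index where outputs differ), so the subset-construction pumping you envision does go through --- but in the deterministic setting, where it has already been carried out in the cited work. To repair your proof, perform this same reduction (no short counterexamples $\Rightarrow$ a consistent isomorphic hypothesis exists $\Rightarrow$ deterministic disagreement on a long attainable sequence) rather than trying to carry infeasibility through the excision.
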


\begin{proof}
    First assume that (1) there is a $k$-attainable label sequence $\inputTrace$, $k > \maxLengthEpisode \geq 2^{|\mdp| + 1}(|\environmentRM| + 1) - 1$, which can be sampled to generate a Type \hardCx counterexample for $\hypothesisRM$, and (2) there are no such $n$-attainable label sequences for $n \leq \maxLengthEpisode$. Let $\hypothesisRMG$ be an SRM s.t. $\hypothesisRMG \equivGraph \hypothesisRM$ and $\hypothesisRMG \equivExpect \environmentRM$ over all $\labelSequence{\maxLengthEpisode}$. $\hypothesisRMG$ exists because there are no Type \hardCx counterexamples for $\hypothesisRM$ over $n$-attainable label sequences for $n \leq \maxLengthEpisode$.
    
    Let $\environmentRM '$ ($\hypothesisRMG '$) be a deterministic RM s.t. $\environmentRM' \equivExpect \environmentRM$ ($\hypothesisRMG ' \equivExpect \hypothesisRMG$). $\environmentRM '$ and $\hypothesisRMG '$ agree on all label sequences up to length $\maxLengthEpisode$ ($\maxLengthEpisode < k$), but $\inputTrace$ is a $k$-attainable label sequence s.t. $\environmentRM '(\inputTrace) \neq \hypothesisRMG '(\inputTrace)$.
    However, this contradicts the automata-theoretic results in Xu et al.~\cite{jirp}.
\end{proof}

Lemma~\ref{lemma:hard-counterexamples} and Corollary~\ref{corollary:infinitely-often} together prove Corollary~\ref{corollary:will-find-counterexamples}.

\begin{corollary}
    \label{corollary:will-find-counterexamples}
    Given sufficient episode length, \sjirp almost surely finds Type \hardCx counterexamples if they exist.
\end{corollary}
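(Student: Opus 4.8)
The plan is to combine the two ingredients the paper has just assembled: Lemma~\ref{lemma:hard-counterexamples}, which reduces the question to trajectories short enough to fit inside an episode, and Corollary~\ref{corollary:infinitely-often}, which guarantees that such trajectories are sampled without bound. Concretely, I would fix the current hypothesis $\hypothesisRM$ and assume that a Type~\hardCx counterexample exists at all, i.e.\ that there is \emph{some} attainable trajectory $\mdpTrajectory$ whose label sequence $\inputTrace$, when its environment output distributions $\environmentRM(\inputTrace) = \distributionSequence{k}$ are sampled, yields with positive probability a trace $(\inputTrace, \outputTrace)$ that is $\dispersionParameter$-inconsistent with every graph-isomorphic $\hypothesisRMG \equivGraph \hypothesisRM$. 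The goal is then to upgrade the "will eventually observe" conclusion of Lemma~\ref{lemma:hard-counterexamples} into a genuinely almost-sure guarantee.

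First I would invoke Lemma~\ref{lemma:hard-counterexamples} with episode length $\maxLengthEpisode \geq 2^{|\mdp|+1}(|\environmentRM|+1)-1$ to pass from the (possibly long) witnessing trajectory to one that fits inside a single episode: the lemma's contradiction argument shows that if no Type~\hardCx-generating trajectory of length at most $\maxLengthEpisode$ existed, then a deterministic RM equivalent in expectation to $\hypothesisRM$ would agree with one equivalent in expectation to $\environmentRM$ on all label sequences up to length $\maxLengthEpisode$ yet disagree on a longer attainable one, contradicting the automata-theoretic bound of Xu et al.~\cite{jirp}. Having secured a short witnessing trajectory $\mdpTrajectory$ of length at most $\maxLengthEpisode$, I would apply Corollary~\ref{corollary:infinitely-often}, which states that \sjirp explores every $\maxLengthEpisode$-attainable trajectory infinitely often.

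The core probabilistic step is then a Borel--Cantelli argument. Each realization of $\mdpTrajectory$ draws a fresh reward vector $\outputTrace$ from the fixed environment CDFs $\distributionSequence{k}$ along the fixed labels of $\mdpTrajectory$, so distinct realizations produce independent samples. Let $p$ denote the probability that a single such draw is $\dispersionParameter$-inconsistent with all $\hypothesisRMG \equivGraph \hypothesisRM$ simultaneously; by the plausibility hypothesis $p > 0$. Over the infinitely many independent realizations guaranteed by Corollary~\ref{corollary:infinitely-often}, the probability that the first $N$ all fail is at most $(1-p)^N$, which tends to $0$, so almost surely some realization produces a Type~\hardCx counterexample, which \sjirp records and acts on in Line~\ref{line:add-counterexample}.

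The step I expect to be the main obstacle is pinning down a single positive constant $p$ together with the independence the Borel--Cantelli estimate needs. The plausibility hypothesis is phrased per-$\hypothesisRMG$ (namely $\Pr(\hypothesisRMG \text{ is } \dispersionParameter\text{-inconsistent}) > 0$ for each isomorphic hypothesis), whereas a Type~\hardCx counterexample requires one sampled $\outputTrace$ that defeats \emph{all} such hypotheses at once. I would argue that the admissible output shifts repairing any $\hypothesisRMG$ are already constrained by the consistency requirements imposed by $\counterexamples$, so that the "defeats all" event still carries positive probability, and that this $p$ stabilizes because the structure of $\hypothesisRM$ changes only finitely often before settling, as established in the argument for Theorem~\ref{theorem:convergence}. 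Once $p$ is a fixed positive constant attached to a fixed short trajectory revisited infinitely often, the remaining geometric-decay estimate is routine.
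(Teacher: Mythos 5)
Your proposal is correct and takes essentially the same route as the paper, whose entire proof is the one-sentence observation that Lemma~\ref{lemma:hard-counterexamples} and Corollary~\ref{corollary:infinitely-often} together yield the result. Your Borel--Cantelli elaboration (independent fresh samples on each of the infinitely many visits, per-visit success probability $p>0$, hence failure probability $(1-p)^N \to 0$), as well as the subtlety you flag about upgrading the per-$\hypothesisRMG$ positive-probability hypothesis to a single $p$ defeating all isomorphic hypotheses at once, simply makes explicit the probabilistic step that the paper leaves implicit inside the statement of Lemma~\ref{lemma:hard-counterexamples}.
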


Now that we have shown that \sjirp will continue to find "structural flaws" in its hypothesis if they exist, we will show that there is an endpoint of this iterative structural improvement.

More formally, Lemma~\ref{lemma:revisiting-classes} states that \sjirp does not revisit structurally isomorphic SRMs.

\begin{lemma}
    \label{lemma:revisiting-classes}
    Let $\hypothesisRM_1 \hypothesisRM_2 \cdots $ be the sequences of hypotheses generated by \sjirp. Let $[\hypothesisRM_{i_1}] [\hypothesisRM_{i_2}] \cdots $ (with $i_1 = 1$) be the sequence of $\equivGraph$-equivalence classes obtained from $\hypothesisRM_1 \hypothesisRM_2 \cdots $ by collapsing equivalent neighboring elemenents into their class. Then for all $j < k$ we have $[\hypothesisRM_{i_j}] \neq [\hypothesisRM_{i_k}]$.
    
\end{lemma}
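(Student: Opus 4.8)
The plan is to show that the $\equivGraph$-class can change only when \sjirp processes a Type~\hardCx counterexample, and that once a class is abandoned at such a step it can never reappear, because it becomes permanently incompatible with the monotonically growing counterexample set. Throughout I will use the two structural facts recorded earlier: the counterexample sets are nested, $\counterexamples_i \subset \counterexamples_j$ for $i < j$, and every hypothesis $\hypothesisRM_j$ is \epsconsistent{} with $\counterexamples_{j-1}$ (hence with every $\counterexamples_i$, $i \le j-1$). I will also use that the \emph{Estimates} step alters only the output function, so it preserves the graph-isomorphism class of its argument while keeping it \epsconsistent{} with the current counterexample set.

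First I would establish the characterization \textbf{class changes $\iff$ Type~\hardCx counterexample}. If the counterexample processed with current hypothesis $\hypothesisRM_i$ is Type~\easyCx, then by definition the next hypothesis comes from a graph-isomorphic witness $\hypothesisRMZ \equivGraph \hypothesisRM_i$, and since \emph{Estimates} preserves structure we get $\hypothesisRM_{i+1} \equivGraph \hypothesisRM_i$, so the class is unchanged. Conversely, if the step is Type~\hardCx, then by definition no SRM graph-isomorphic to $\hypothesisRM_i$ is \epsconsistent{} with $\counterexamples_i = \counterexamples_{i-1} \cup \{(\inputTrace,\outputTrace)\}$; but the inferred $\hypothesisRM_{i+1}$ \emph{is} \epsconsistent{} with $\counterexamples_i$, so necessarily $\hypothesisRM_{i+1} \not\equivGraph \hypothesisRM_i$ and the class genuinely changes. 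Consequently, in the collapsed sequence each boundary between $[\hypothesisRM_{i_j}]$ and $[\hypothesisRM_{i_{j+1}}]$ is witnessed by exactly one Type~\hardCx step.

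The heart of the argument is an exclusion property. Let $m$ be the step at which the run of class $[\hypothesisRM_{i_j}]$ ends, i.e. $\hypothesisRM_m \in [\hypothesisRM_{i_j}]$ while $\hypothesisRM_{m+1} \notin [\hypothesisRM_{i_j}]$; by the characterization this step is Type~\hardCx, so no SRM in $[\hypothesisRM_m] = [\hypothesisRM_{i_j}]$ is \epsconsistent{} with $\counterexamples_m$. Now take any later hypothesis $\hypothesisRM_k$ with $k > m$. Since $k-1 \ge m$ and the counterexample sets are nested, $\hypothesisRM_k$ is \epsconsistent{} with $\counterexamples_{k-1} \supseteq \counterexamples_m$, hence \epsconsistent{} with $\counterexamples_m$. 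If $\hypothesisRM_k$ lay in $[\hypothesisRM_{i_j}]$ it would be an SRM of that class \epsconsistent{} with $\counterexamples_m$, contradicting Type~\hardCx-ness of step $m$. Therefore $\hypothesisRM_k \notin [\hypothesisRM_{i_j}]$ for every $k > m$.

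To finish, I would note that all representatives $\hypothesisRM_{i_{j+1}}, \hypothesisRM_{i_{j+2}}, \dots$ occur at indices strictly greater than $m$, so by the exclusion property none lies in $[\hypothesisRM_{i_j}]$; that is, $[\hypothesisRM_{i_l}] \ne [\hypothesisRM_{i_j}]$ for all $l > j$. As $j$ is arbitrary this yields $[\hypothesisRM_{i_j}] \ne [\hypothesisRM_{i_k}]$ for all $j < k$, as claimed. I expect the main obstacle to be bookkeeping rather than mathematical depth: one must fix the index convention relating each processed counterexample to the pair $(\hypothesisRM_i, \counterexamples_{i-1})$ and verify carefully that \emph{Estimates} preserves both the structural class and \epsconsistency{}, so that the clean ``class boundary $=$ Type~\hardCx step'' characterization is legitimate; once that is in place, the monotonicity of $\counterexamples$ together with the persistent consistency invariant closes the argument.
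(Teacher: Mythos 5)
Your proof is correct and rests on the same underlying mechanism as the paper's --- a Type~\hardCx step combined with monotonicity of the counterexample sets --- but it is organized differently and more directly. The paper proceeds by contradiction: assuming $[\hypothesisRM_{i_j}] = [\hypothesisRM_{i_k}]$, it selects an intermediate class, applies the output-fixing function $\isomorphismFix$ to $\hypothesisRM_{i_j}$ to build an explicit SRM in $[\hypothesisRM_{i_j}]$ that is \epsconsistent{} with $\counterexamples_{i_k-1}$, and argues that consequently all counterexamples processed after leaving $[\hypothesisRM_{i_j}]$ would have been Type~\easyCx, so the intermediate class cannot exist. You instead prove a permanent-exclusion property: at the step $m$ where a class is abandoned, the Type~\hardCx condition states that \emph{no} member of $[\hypothesisRM_m]$ is \epsconsistent{} with $\counterexamples_m$; since every later hypothesis $\hypothesisRM_k$ is \epsconsistent{} with $\counterexamples_{k-1} \supseteq \counterexamples_m$, none can lie in $[\hypothesisRM_m]$. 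This uses the later hypothesis itself as the contradiction witness, so neither the intermediate class nor the $\isomorphismFix$ construction is needed, and it makes explicit the invariant that Corollary~\ref{corollary:final-hypothesis} actually consumes. Both arguments depend on the two facts you flag as requiring care: that \emph{Estimates} modifies only outputs while preserving \epsconsistency{} (the paper asserts this for the midrange estimator), and that the Type~\easyCx/\hardCx classification is defined by \emph{existence} of an isomorphic consistent SRM rather than by what the search happens to return, so class changes occur exactly at Type~\hardCx steps. Your bookkeeping on both points is sound.
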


\begin{proof}
    The sequence $\hypothesisRM_1 \hypothesisRM_2 \cdots $ is induced by the counterexample sequence $\counterexamples_1 \counterexamples_2 \allowbreak \cdots$ (with $\counterexamples_j \subset \counterexamples_k$ for all $j < k$) so that $\hypothesisRM_k$ is consistent with $\counterexamples_{k-1}$ for all $k > 1$ (and thus every $\counterexamples_j$ for $1 \leq j \leq k - 1$).
    
    By definition, a counterexample of Type $\easyCx$ induces a new hypothesis that remains in the same class, while a Type $\hardCx$ counterexample causes a "jump" to the next class. Assume $[\hypothesisRM_{i_j}] = [\hypothesisRM_{i_k}]$ for some $j < k$. Then there exists at least one intermediate class $[\hypothesisRM_{i_n}]$ ($j < n < k$) s.t. $[\hypothesisRM_{i_n}] \notin \set{[\hypothesisRM_{i_j}], [\hypothesisRM_{i_k}]}$ (otherwise $\hypothesisRM_{i_j} \cdots \hypothesisRM_{i_k}$ gets collapsed into $[\hypothesisRM_{i_j}]$). Let $\isomorphism : \hypothesisRM_{i_j} \mapsto \hypothesisRM_{i_k}$ be an isomorphism and $\isomorphismFix : {\mealyStates}^{\hypothesisRM_{i_j}} \times \mealyInputAlphabet \to \reals$ the function that "fixes" outputs of $\hypothesisRM_{i_j}$ so that it is equivalent in expectation to $\hypothesisRM_{i_k}$. By applying $\isomorphismFix$ to outputs of $\hypothesisRM_{i_j}$ we obtain an SRM that is $\dispersionParameter$-consistent with $\counterexamples_{i_k - 1}$ (as all output transitions from any $\hypothesisRM_i$ have the same range $2 \dispersionParameter$). But since $\counterexamples_{i_n - 1} \subset \counterexamples_{i_k - 1}$ this makes $\hypothesisRM_{i_n} \in [\hypothesisRM_{i_j}]$ as all new counterexamples between $\hypothesisRM_{i_j}$ and $\hypothesisRM_{i_n}$ (and indeed $\hypothesisRM_{i_k}$) were of Type $\easyCx$, a contradiction.
\end{proof}

Due to Corollary~\ref{corollary:will-find-counterexamples}, Lemma~\ref{lemma:revisiting-classes}, and the fact that there are only finitely many $\equivGraph$-equivalence classes for a fixed maximum number of states, we obtain Corollary~\ref{corollary:final-hypothesis}.

\begin{corollary}
    \label{corollary:final-hypothesis}
    There exists a final equivalence class $[\hypothesisFinal]$ which will never be left.
\end{corollary}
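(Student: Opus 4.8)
The plan is to show that the sequence of $\equivGraph$-equivalence classes through which \sjirp passes is finite, so that it has a last element, and then to argue that this last class is never abandoned. I would work with the collapsed class-sequence $[\hypothesisRM_{i_1}] [\hypothesisRM_{i_2}] \cdots$ introduced in Lemma~\ref{lemma:revisiting-classes}, in which consecutive graph-isomorphic hypotheses are merged, so that each new entry corresponds exactly to a Type~\hardCx counterexample that forces a structural ``jump.'' Existence of a final class then reduces to showing this sequence cannot grow indefinitely.

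First I would bound the number of states appearing in any hypothesis. Since the environment SRM $\environmentRM$ is $\dispersionParameter$-consistent with every sampled counterexample set (for symmetric distributions a sample $r$ from $\distributionClosedN{i}{a_i}{b_i}$ satisfies $|r - \E[\distributionClosedN{i}{a_i}{b_i}]| \leq (b_i - a_i)/2 \leq \dispersionParameter$), the constraint problem of Task~\ref{task:one} is satisfiable at size $|\environmentRM|$ for every $\counterexamples$. Because inference returns a \emph{minimal} consistent SRM, every inferred hypothesis has at most $|\environmentRM|$ states. Over the finite input alphabet $\mealyInputAlphabet$ there are then only finitely many graph-isomorphism types of the underlying automaton, i.e.\ finitely many $\equivGraph$-classes.

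Next I would combine this finiteness with Lemma~\ref{lemma:revisiting-classes}, which states that the entries of the collapsed class-sequence are pairwise distinct. A sequence of pairwise-distinct elements drawn from a finite set must itself be finite; hence the class-sequence terminates at some class $[\hypothesisFinal]$, reached after finitely many episodes. To see that $[\hypothesisFinal]$ is never left, note that leaving a class requires a Type~\hardCx counterexample, which would append a new, distinct class and thereby lengthen the sequence, contradicting that $[\hypothesisFinal]$ is its last entry. After $[\hypothesisFinal]$ is reached, therefore, only Type~\easyCx counterexamples can occur, and these keep \sjirp within the same class.

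The delicate point, and where Corollary~\ref{corollary:will-find-counterexamples} is essential, is ruling out the possibility that \sjirp merely \emph{appears} to settle because it fails to discover a counterexample that in fact exists. Corollary~\ref{corollary:will-find-counterexamples} guarantees that, given episode length $\maxLengthEpisode \geq 2^{|\mdp|+1}(|\environmentRM|+1)-1$, any extant Type~\hardCx counterexample is almost surely observed. Thus staying in $[\hypothesisFinal]$ genuinely reflects the absence of further structural flaws rather than a sampling artifact, which is precisely the characterization of $[\hypothesisFinal]$ that the convergence argument of Theorem~\ref{theorem:convergence} will exploit. I expect this interplay between the structural settling (finiteness together with Lemma~\ref{lemma:revisiting-classes}) and the guarantee that no undetected flaw remains (Corollary~\ref{corollary:will-find-counterexamples}, resting on the episode-length bound of Lemma~\ref{lemma:hard-counterexamples}) to be the main obstacle requiring care.
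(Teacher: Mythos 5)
Your proposal is correct and follows essentially the same route as the paper, which obtains the corollary directly by combining Corollary~\ref{corollary:will-find-counterexamples}, Lemma~\ref{lemma:revisiting-classes}, and the finiteness of $\equivGraph$-equivalence classes for a fixed maximum number of states. Your explicit justification of that finiteness---every inferred hypothesis has at most $|\environmentRM|$ states because the environment SRM is always $\dispersionParameter$-consistent with $\counterexamples$ and the inference is minimal---is a detail the paper leaves implicit, but it does not alter the argument.
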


With $\hypothesisFinalSeq$ we denote the subsequence of $\hypothesisRM_1 \hypothesisRM_2 \cdots$ comprised of elements from $[\hypothesisFinal]$.

\subsubsection{Correcting output distributions}

The following two lemmas state that in the \emph{Estimates} step, eventually the reward sample sets $\estimatesSet(\mealyCommonState, \mdpLabel)$ contain samples from distributions with equal expectations (Lemma~\ref{lemma:equal-expectations}), and eventually the mean estimators $\meanEstimate$ computed from those sets are unbiased (Lemma~\ref{lemma:unbiased}).

\begin{lemma}
    \label{lemma:equal-expectations}
    For a machine $\hypothesisRM$ in the $[\hypothesisFinal]$ subsequence let $\setOfConsistentTraces_{\hypothesisRM}$ be the set of all traces in $\setOfSeenTraces$ consistent with $\hypothesisRM$, and $\estimatesSet_{\hypothesisRM}(\mealyCommonState, \mdpLabel) = \set{x \in \reals : (\inputTrace \mdpLabel, \outputTrace x) \in \Pref(C^{\hypothesisRM})$, $\hypothesisRM$ in state $\mealyCommonState$ after reading $\inputTrace}$ be the set of empirical rewards from traces in $\setOfConsistentTraces_{\hypothesisRM}$ for the transition from $\mealyCommonState$ on $\mdpLabel$. Let $\distributionSequence{n}$ denote the different distributions that elements of $\estimatesSet_{\hypothesisRM}(\mealyCommonState, \mdpLabel)$ are sampled from. Then $\E[\outputDistribution_1] = \cdots = \E[\outputDistribution_n]$.
\end{lemma}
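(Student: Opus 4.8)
The plan is to show that all environment distributions feeding a single hypothesis transition have supports contained in one common interval of width $2\dispersionParameter$, and then invoke Assumption~\ref{assumption:non-containment} to force equal expectations. Fix the transition $(\mealyCommonState, \mdpLabel)$ and write $o$ for the mean of $\hypothesisRM$'s output there, so that $\hypothesisRM$ emits the symmetric distribution $\distributionClosed{o - \dispersionParameter}{o + \dispersionParameter}$. By the definition of $\setOfConsistentTraces_{\hypothesisRM}$ and the filtering step in \emph{Estimates}, $\hypothesisRM$ is $\dispersionParameter$-consistent with every trace contributing to $\estimatesSet_{\hypothesisRM}(\mealyCommonState, \mdpLabel)$, so every recorded reward $x$ satisfies $|x - o| \leq \dispersionParameter$. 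Each such $x$ is a sample from one of the environment distributions $\outputDistribution_1, \ldots, \outputDistribution_n$, where $\outputDistribution_j = \distributionClosedN{i_j}{a_{i_j}}{b_{i_j}}$ is the distribution $\environmentRM$ emits on reading $\mdpLabel$ after some attainable label sequence that drives $\hypothesisRM$ into state $\mealyCommonState$.

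The crux is to upgrade the observation ``the recorded samples lie within $\dispersionParameter$ of $o$'' to the structural statement ``the full support $[a_{i_j}, b_{i_j}]$ of each $\outputDistribution_j$ lies within $\dispersionParameter$ of a common mean.'' First I would use that $\hypothesisRM$ belongs to the final class $[\hypothesisFinal]$: by Corollary~\ref{corollary:will-find-counterexamples} no plausible Type \hardCx counterexample survives, so (via the distinguishing-sequence argument of Lemma~\ref{lemma:hard-counterexamples}, which rules out any bounded-length separation between $\environmentRM$ and the hypothesis class) there is a single graph-isomorphic relabeling $\hypothesisRMG \equivGraph \hypothesisRM$ that is $\dispersionParameter$-consistent with \emph{all} attainable traces simultaneously, not merely the seen ones. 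Let $o^{\ast}$ denote the mean of $\hypothesisRMG$ at the transition corresponding to $(\mealyCommonState, \mdpLabel)$ under the isomorphism. Because each $\outputDistribution_j$ is continuous with full support on $[a_{i_j}, b_{i_j}]$, every value in that interval is a reward the environment can emit with positive density on an attainable trajectory through $\mealyCommonState$; consistency of $\hypothesisRMG$ with all such realizations then forces $[a_{i_j}, b_{i_j}] \subseteq [o^{\ast} - \dispersionParameter, o^{\ast} + \dispersionParameter]$ for every $j$.

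With all supports confined to the single interval $[o^{\ast} - \dispersionParameter, o^{\ast} + \dispersionParameter]$ of length $2\dispersionParameter$, any pair $\outputDistribution_p, \outputDistribution_q$ satisfies $[a_{i_p}, b_{i_p}] \cup [a_{i_q}, b_{i_q}] \subseteq [o^{\ast} - \dispersionParameter, o^{\ast} + \dispersionParameter]$, which is exactly the premise of Assumption~\ref{assumption:non-containment} with $\mean = o^{\ast}$; it yields $\E[\outputDistribution_p] = \E[\outputDistribution_q]$. Applying this to every pair gives $\E[\outputDistribution_1] = \cdots = \E[\outputDistribution_n]$, as claimed. I expect the main obstacle to be precisely the passage from seen samples to full supports in the second paragraph: one must argue rigorously that the final-class property supplies a \emph{uniformly} $\dispersionParameter$-consistent isomorphic hypothesis over all attainable traces, which is where Lemma~\ref{lemma:hard-counterexamples} together with the boundedness and continuity of the $\outputDistribution_j$ are essential, and where care is needed because the hypothesis mean $o$ may still drift across iterations inside $[\hypothesisFinal]$ through Type \easyCx corrections even though the underlying structure is frozen.
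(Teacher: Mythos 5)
Your overall skeleton---confine the supports of all distributions feeding a given transition to a single interval of width $2\dispersionParameter$ and then invoke Assumption~\ref{assumption:non-containment}---is the same reduction the paper uses (the paper runs it as a contradiction: unequal means together with a covering interval of length at most $2\dispersionParameter$ would violate Assumption~\ref{assumption:non-containment}, so the covering interval must be longer than $2\dispersionParameter$). The gap is in how you establish the confinement. You derive it from the claim that, because $\hypothesisRM$ lies in $[\hypothesisFinal]$, there is a \emph{single} $\hypothesisRMG \equivGraph \hypothesisRM$ that is $\dispersionParameter$-consistent with \emph{all} attainable traces simultaneously, citing Lemma~\ref{lemma:hard-counterexamples} and Corollary~\ref{corollary:will-find-counterexamples}. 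But those results quantify per label sequence: a ``plausible Type~\hardCx counterexample'' is a single sequence on which \emph{every} isomorphic copy fails with positive probability, so their negation only yields, for \emph{each} attainable sequence, \emph{some} isomorphic copy that is almost surely consistent with that sequence. Different sequences may require different output shifts, so the quantifier swap to one copy for all sequences---exactly the ``uniformity'' you flag as the main obstacle---does not follow from the cited results; as stated, the justification is essentially circular, since the uniform claim is more or less equivalent to the lemma you are trying to prove.

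Closing this gap is where the paper's actual argument lives, and it uses the algorithm's definition of Type~\hardCx relative to the growing counterexample set $\counterexamples$, not Lemma~\ref{lemma:hard-counterexamples} alone. Suppose two distributions $\outputDistribution_1, \outputDistribution_2$ feeding the same transition have supports whose shortest covering interval has length greater than $2\dispersionParameter$ (by Assumption~\ref{assumption:non-containment} this is forced whenever their means differ). Then samples $r_1 \sim \outputDistribution_1$ and $r_2 \sim \outputDistribution_2$ with $|r_1 - r_2| > 2\dispersionParameter$ occur with positive probability, and the corresponding label sequences are explored infinitely often (Corollary~\ref{corollary:infinitely-often}). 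Since any hypothesis output at that transition is within $\dispersionParameter$ of at most one of the two values, at every moment at least one of the two sides produces counterexamples; once a trace from one side enters $\counterexamples$, every subsequent hypothesis in the class must remain consistent with it, so a trace from the other side is eventually observed, is itself a counterexample, and no $\hypothesisRMG \equivGraph \hypothesisRM$ can be consistent with both traces at once---a Type~\hardCx counterexample, contradicting that $[\hypothesisFinal]$ is never left. If you replace the second paragraph of your proposal with this argument (or use it to prove your uniform-consistency claim), the remainder of your proof goes through.
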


\begin{proof}
    We prove Lemma~\ref{lemma:equal-expectations} in case when $n=2$, the general statement follows easily. Let $\estimatesSet_{\hypothesisRM}(\mealyCommonState, \mdpLabel)$ contain samples from $\outputDistribution_1$ and $\outputDistribution_2$ such that $\E[\outputDistribution_1] \neq \E[\outputDistribution_2]$. For distribution $d$ let $R(\outputDistribution) \subset
        \reals$ be the image of $\outputDistribution$. Let $[\alpha, \beta]$ be the shortest interval such that $[\alpha, \beta] \supseteq R(\outputDistribution_1) \; \cup \; R(\outputDistribution_2)$. If $\beta - \alpha \leq 2 \dispersionParameter$ Assumption~\ref{assumption:non-containment} is broken thus $\beta - \alpha > 2 \dispersionParameter$. But then $\hypothesisRM \notin [\hypothesisFinal]$ because a Type \hardCx counterexample exists, and due to Corollary~\ref{corollary:will-find-counterexamples} it will eventually be found. \sjirp eventually observes two reward prefixes $\rewardSequence{j}$ and $\rewardSequence{k}$, where $\commonReward_j \sim d_1$, $\commonReward_k \sim d_2$, and $|\commonReward_j - \commonReward_k| > 2 \dispersionParameter$, such that the same transition in $\hypothesisRM$ would have to account for both of them. The reward sequence that is observed latter comes from a Type \hardCx counterexample because it is impossible for any $\hypothesisRMG \equivGraph \hypothesisRM$ to be consistent with both traces that the two reward sequences come from as (due to the graph isomorphism) they would belong to the same transition in $\hypothesisRMG$ which breaks $\dispersionParameter$ consistency with $\counterexamples$.
\end{proof}

\begin{lemma}
    \label{lemma:unbiased}
    For a machine $\hypothesisRM$ in the $[\hypothesisFinal]$ subsequence let $\estimatesSet_{\hypothesisRM}(\mealyCommonState, \mdpLabel)$ and $\distributionSequence{n}$ be defined as in Lemma~\ref{lemma:equal-expectations}. Considering samples in $\estimatesSet_{\hypothesisRM}(\mealyCommonState, \mdpLabel) = \set{R_1, \ldots, R_k}$ as random variables (for all $1 \leq i \leq k$ there is a $1 \leq j \leq n$ s.t. $R_i \sim d_j$), let $\meanEstimate = (R_{(1)} + R_{(k)})/2$ (the midrange estimator used in \emph{Estimates}). Then $\E[\meanEstimate] = \E[\outputDistribution_j]$ for all $1 \leq j \leq n$.
\end{lemma}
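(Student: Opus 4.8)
The plan is to exploit the symmetry of the reward distributions about their common mean, which turns the midrange into a reflection-invariant statistic whose expectation is pinned down by a one-line distributional identity. First I would invoke Lemma~\ref{lemma:equal-expectations} to fix a single value $\mu := \E[\outputDistribution_1] = \cdots = \E[\outputDistribution_n]$ shared by all distributions that contribute samples to $\estimatesSet_{\hypothesisRM}(\mealyCommonState, \mdpLabel)$. The crucial structural observation comes from the standing assumption (in the symmetric setting of Algorithm~\ref{algorithm:sjirp}) that environment noise is symmetric: each $\outputDistribution_j = \distributionClosed{a_j}{b_j}$ then satisfies $\E[\outputDistribution_j] = (a_j + b_j)/2$, so $a_j + b_j = 2\mu$ and the support of $\outputDistribution_j$ is $[\mu - c_j, \mu + c_j]$ with $c_j = (b_j - a_j)/2 \leq \dispersionParameter$. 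In other words, combining Lemma~\ref{lemma:equal-expectations} with symmetry shows that every $\outputDistribution_j$ is symmetric about the \emph{same} center $\mu$, so each sample $R_i$ and its reflection $2\mu - R_i$ are equal in distribution.

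Second, I would run the reflection argument on the order statistics. Because the samples arise from independent visits to the transition across the traces in $\setOfSeenTraces$, and each is individually symmetric about $\mu$, the joint law of $(R_1, \ldots, R_k)$ is invariant under the componentwise map $R_i \mapsto 2\mu - R_i$. This map reverses the ordering, carrying the minimum $R_{(1)}$ to $2\mu - R_{(k)}$ and the maximum $R_{(k)}$ to $2\mu - R_{(1)}$; hence the sum $S := R_{(1)} + R_{(k)}$ is equal in distribution to $4\mu - S$. Taking expectations yields $\E[S] = 4\mu - \E[S]$, so $\E[S] = 2\mu$, and dividing by two gives $\E[\meanEstimate] = \mu = \E[\outputDistribution_j]$ for every $1 \leq j \leq n$, which is exactly the claim.

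The main obstacle is not the symmetry algebra but the justification that the samples are genuinely independent and that the configuration they arise from is legitimate to reflect: both the cardinality $k$ and the assignment of each $R_i$ to a source distribution $\outputDistribution_j$ are themselves random, determined by which attainable trajectories \sjirp happens to have explored. I would handle this by conditioning on the realized assignment (the record of how many samples were drawn from each $\outputDistribution_j$); the reflection argument above gives conditional expectation $\mu$ for \emph{every} such assignment, and the law of total expectation then removes the conditioning. I would also note that $\mu$ is well defined only because $\hypothesisRM \in [\hypothesisFinal]$, which is precisely the hypothesis that makes Lemma~\ref{lemma:equal-expectations} applicable. Finally, it is worth flagging that symmetry is essential exactly at the step fixing the common center $\mu$: for asymmetric noise the midrange is biased, which is why the asymmetric variant in Appendix~\ref{sec:asymmetric-noise} replaces it with the empirical mean.
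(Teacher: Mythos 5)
There is a genuine gap, and it sits exactly where your proposal waves it through. By the definition carried over from Lemma~\ref{lemma:equal-expectations}, the set $\estimatesSet_{\hypothesisRM}(\mealyCommonState, \mdpLabel)$ is built only from traces in $\setOfConsistentTraces_{\hypothesisRM}$, i.e.\ traces that survive the ``skip if not $\dispersionParameter$-consistent with $\hypothesisRM$'' filter in \emph{Estimates}. That filter is a condition on the observed reward \emph{values}, so the retained samples are not unconditioned draws $R_i \sim \outputDistribution_j$: they follow a truncated law. Your fix --- conditioning on the realized assignment of samples to source distributions --- only handles exploration randomness (which trajectories were visited); it does nothing about this value-dependent selection, and after truncation the reflection $R_i \mapsto 2\mu - R_i$ no longer preserves the joint law, so $\E[S] = 4\mu - \E[S]$ fails. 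Concretely: if the environment output is $\uniformClosed{0}{2\dispersionParameter}$ (mean $\dispersionParameter$) but the current hypothesis carries the estimate $1.5\dispersionParameter$ for that transition, only rewards in $[0.5\dispersionParameter, 2\dispersionParameter]$ survive the filter, and the midrange of the retained samples concentrates near $1.25\dispersionParameter \neq \dispersionParameter$. So the statement, read with the paper's definition of $\estimatesSet_{\hypothesisRM}$, is not provable by symmetry alone; it holds only \emph{eventually} along the $[\hypothesisFinal]$ subsequence, which is precisely what your version overstates.

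Closing this gap is the actual content of the paper's proof: it argues that once $\hypothesisRM \in [\hypothesisFinal]$, any trace in $\setOfSeenTraces$ inconsistent with $\hypothesisRM$ must be a Type~\easyCx counterexample (a Type~\hardCx one would be reproduced infinitely often, contradicting finality); such traces are finitely many, get added to $\counterexamples$ with the hypothesis shifted to account for them, and no new ones accumulate, so after finitely many hypotheses in the class one has $\setOfConsistentTraces_{\hypothesisRM} = \setOfSeenTraces$. Only at that point are the $R_i$ genuine i.i.d.\ samples of the environment distribution, and the paper then simply \emph{cites} the unbiasedness of the midrange for bounded symmetric distributions --- which is the one fact your reflection argument does prove rigorously. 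In short, you supplied a careful proof of the step the paper takes for granted, and took for granted the step the paper actually proves; to repair your argument you would need to prepend the paper's ``filtering is eventually vacuous'' analysis (or an equivalent), after which your reflection computation is a clean way to finish.
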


\begin{proof}
    Without loss of generalization let $\estimatesSet_{\hypothesisRM}(\mealyCommonState, \mdpLabel)$ contain samples from a single environment distribution $\outputDistribution$.
    
    First note that all inconsistent traces in $\setOfSeenTraces$ (once the current hypothesis $\hypothesisRM \in [\hypothesisFinal]$) are necessarily of Type \easyCx. If there was a trace $\trace \in \setOfSeenTraces$ that was a Type \hardCx counterexample, its attainable label sequence would be encountered infinitely often and with probability $1$ another Type \hardCx counterexample would be reproduced, which is a contradiction with $\hypothesisRM \in [\hypothesisFinal]$.
    
    Finally, \emph{new} Type \easyCx counterexamples are never introduced into $\setOfSeenTraces$. Once such traces are encountered, they are added into $\counterexamples$ and the hypothesis is "fixed" to account for them. The remaining finite number of Type \easyCx counterexamples in $\setOfSeenTraces$ are all eventually accounted for, as their label sequences are explored and more extremal samples are collected.
    
    This establishes that once some finite amount of hypotheses in $[\hypothesisFinal]$ have been generated, $\setOfConsistentTraces_{\hypothesisRM} = \setOfSeenTraces$. Then then all $R_1 \cdots R_n$ in $\estimatesSet_{\hypothesisRM}(\mealyCommonState, \mdpLabel)$ are independent samples of $\outputDistribution$ and the lemma follows because the midrange estimator is unbiased.
\end{proof}

\subsubsection{Proof of Theorem~\ref{theorem:convergence}}

Theorem~\ref{theorem:convergence} states that the sequence $\hypothesisSeq$ generated by \sjirp converges to an SRM that is equivalent in expectation to the true environment one. As Corollary~\ref{corollary:final-hypothesis} established that \sjirp enters a final $\equivGraph$-class, we state the proof for the $\hypothesisFinalSeq$ subsequence.

\begin{proof}
    
    Without loss of generalization assume that for any two SRMs $\hypothesisRM$ and $\hypothesisRMG$ in $\hypothesisFinalSeq$ the graph-isomorphism $\isomorphism : \hypothesisRM \mapsto \hypothesisRMG$ is an identity, i.e. machines in $\hypothesisFinalSeq$ share the same state set $\mealyStates$ and $\mealyTransition_{\hypothesisRM}(\mealyCommonState, \mdpLabel) = \mealyTransition_{\hypothesisRMG}(\mealyCommonState, \mdpLabel)$ for all $\mealyCommonState \in \mealyStates$ and $\mdpLabel \in \mealyInputAlphabet$.
    To prove that $\hypothesisFinalSeq$ converges to the true environment reward machine we look at corresponding sequences $\estimatesSet^f_1(\mealyCommonState, \mdpLabel) \estimatesSet^f_2(\mealyCommonState, \mdpLabel) \cdots$ (for all $\mealyCommonState \in \mealyStates$, $\mdpLabel \in \mealyInputAlphabet$) of sets of empirical rewards from which \sjirp estimates the expectations for outputs of hypothesis SRMs.
    Since there will be no new Type \hardCx counterexamples, and because states and transitions of the hypotheses in the $\hypothesisFinalSeq$ are assumed to be fixed with $\isomorphism$, for each pair $(\mealyCommonState, \mdpLabel)$ this sequence of sample sets will only grow, and will contain only samples from a fixed set of distributions. This is a consequence of the fact that the environment SRM is fixed as well, so any attainable prefix of labels always induces the same pair of runs over the fixed hypotheses and the environment SRM. Due to Lemma~\ref{lemma:equal-expectations}, these distributions have equal expectations.
    Lemma~\ref{lemma:unbiased} shows the estimators computed from these sets are unbiased. Since there will be infinitely many Type \easyCx counterexamples, we conclude that the estimators converge to the correct expectations, that is $\hypothesisFinalSeq$ converges to an SRM that is equivalent in expectation to the true environment one.
\end{proof}


\end{document}